\newcommand{\mathbbm}[1]{\text{\usefont{U}{bbm}{m}{n}#1}}
\gdef\@copyrightpermission{
  \begin{minipage}{0.3\columnwidth}
  \href{https://creativecommons.org/licenses/by/4.0/}{\includegraphics[width=0.90\textwidth]{./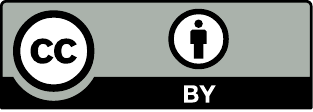}}
  \end{minipage}\hfill
   \begin{minipage}{0.7\columnwidth}
   \href{https://creativecommons.org/licenses/by/4.0/}{This work is licensed under a Creative Commons Attribution International 4.0 License.}
  \end{minipage}
  \vspace{5pt}
}
\begin{document}

\title[Boosting Multitask Learning on Graphs through Higher-Order Task Affinities]{Boosting Multitask Learning on Graphs through\\Higher-Order Task Affinities}

\author{Dongyue Li}
\affiliation{%
  \institution{Northeastern University}
  \country{}
}
\email{li.dongyu@northeastern.edu}

\author{Haotian Ju}
\affiliation{%
  \institution{Northeastern University}
  \country{}
}
\email{ju.h@northeastern.edu}

\author{Aneesh Sharma}
\affiliation{%
  \institution{Google}
  \country{}
}
\email{aneesh@google.com}

\author{Hongyang R. Zhang}
\affiliation{%
  \institution{Northeastern University}
  \country{}
}
\email{ho.zhang@northeastern.edu}

\renewcommand{\shortauthors}{Dongyue Li, Haotian Ju, Aneesh Sharma, \& Hongyang R. Zhang}

\begin{abstract}
Predicting node labels on a given graph is a widely studied problem with many applications, including community detection and molecular graph prediction. This paper considers predicting multiple node labeling functions on graphs simultaneously and revisits this problem from a multitask learning perspective. For a concrete example, consider overlapping community detection: each community membership is a binary node classification task. Due to complex overlapping patterns, we find that negative transfer is prevalent when we apply naive multitask learning to multiple community detection, as task relationships are highly nonlinear across different node labeling. To address the challenge, we develop an algorithm to cluster tasks into groups based on a higher-order task affinity measure. We then fit a multitask model on each task group, resulting in a boosting procedure on top of the baseline model. We estimate the higher-order task affinity measure between two tasks as the prediction loss of one task in the presence of another task and a random subset of other tasks. Then, we use spectral clustering on the affinity score matrix to identify task grouping. We design several speedup techniques to compute the higher-order affinity scores efficiently and show that they can predict negative transfers more accurately than pairwise task affinities. We validate our procedure using various community detection and molecular graph prediction data sets, showing favorable results compared with existing methods. Lastly, we provide a theoretical analysis to show that under a planted block model of tasks on graphs, our affinity scores can provably separate tasks into groups.
\end{abstract}

\begin{CCSXML}
<ccs2012>
<concept>
<concept_id>10010147.10010257.10010258.10010262</concept_id>
<concept_desc>Computing methodologies~Multi-task learning</concept_desc>
<concept_significance>500</concept_significance>
</concept>
</ccs2012>
\end{CCSXML}

\ccsdesc[500]{Computing methodologies~Multitask Learning}
\ccsdesc[500]{Computing methodologies~Neural Networks}

\keywords{Multitask Learning; Boosting; Modeling Task Relationships}

\maketitle
\hypersetup{linkcolor={ACMRed},citecolor={black},urlcolor={ACMRed}}

\section{Introduction}

\begin{figure*}[t!]
    \centering
    \begin{minipage}[b]{0.99\textwidth}
        \centering
        \includegraphics[width=0.99\textwidth]{./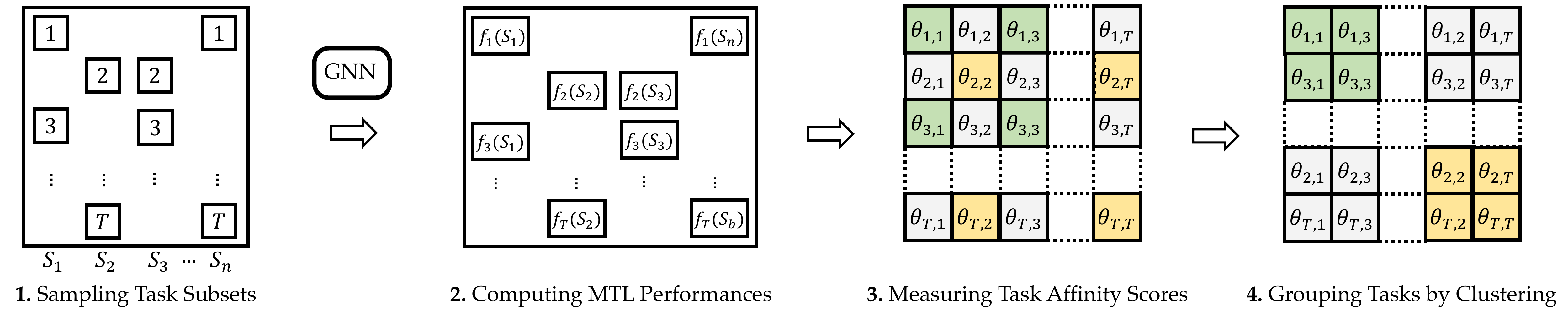}
    \end{minipage}
    \caption{Overview of our boosting procedure:
    (1) We sample random subsets of tasks, each subset containing a fixed number of tasks.
    (2) For each subset $S_k$, for $k = 1, 2, \dots, n$, we fit a multitask learning (MTL) model on the combined data sets of all tasks in $S_k$, using a graph neural network (GNN) as the shared encoder. After fitting the MTL model, we evaluate its prediction performance for each task $i \in S_k$, denoted as $f_i(S_k)$.
    (3) We compute an affinity score $\theta_{i, j}$ by averaging task $i$'s scores among all subsets as in equation \eqref{eq_aff}, where $n_{i, j}$ is the number of subsets including both $i, j$.
    This results in a $T$ by $T$ affinity matrix, denoted as $[\theta_{i, j}]_{T\times T}$.
    (4) We apply spectral clustering on this matrix to find clusters of task groups and fit one GNN for each task group.}
    \label{fig_pipeline}
\end{figure*}

Given multiple node label prediction tasks on graphs, can we design a model to predict all of them simultaneously?
For example, consider supervised, overlapping community detection \cite{chen2017supervised}: Given a set of labels as a seed set, can we learn to predict whether another node of the graph belongs to a community?
This is an example of multi-label learning: The input involves labels of multiple communities. The goal is to predict community labels for the remaining nodes of each community.
In this paper, we cast this multi-label classification problem into a more general formulation of multitask learning \cite{caruana1997multitask,zhang2021survey}, which has numerous other applications in knowledge graphs \cite{wang2019multi} and protein-protein interactions \cite{hu2019strategies}.
Traditionally, multitask learning works by fitting a single model on all tasks \citep{crammer2008learning,ben2010theory}.
By contrast, we investigate task relationships on graphs with a shared graph neural network. We observe notable negative transfers between various tasks.
We address this challenge by designing a boosting procedure to cluster tasks into groups and then fit multiple graph neural networks, one for each task group.

Negative transfer refers to scenarios where learning multiple tasks using a shared encoder can worsen performance compared to single-task learning (STL) for one task \cite{wu2020understanding}, and this has been observed for various data modalities \cite{yang2016deep,vu2020exploring,wu2020generalization,li2021improved}.
The cause can often be attributed to heterogeneity in input features or dataset sizes.
Much less is known about graph data.
\citet{zhu2021transfer} and \citet{ju2023generalization} identify structural differences for transfer learning on graphs due to the graph diffusion process.
These structural differences can also cause negative interference in multitask learning.
Thus, building multitask graph neural networks requires better modeling of structural differences between tasks.
This modeling could also be used to identify task grouping \cite{fifty2021efficiently}.

The classical multitask learning literature has studied task relatedness through heuristics and theoretical perspectives \cite{caruana1997multitask,ben2003exploiting,ben2010theory}. These results do not naturally apply to nonlinear neural networks. A naive solution to optimize multitask learning performance is to search through all possible combinations of tasks to select the best subset. This is costly to compute for a large number of tasks. Prior work \cite{standley2020tasks} computes pairwise task affinity scores by training one model for every pair of tasks and approximates higher-order task affinities by averaging first-order affinity scores. \citet{fifty2021efficiently,yu2020gradient} compute first-order task affinities using gradient similarity. These methods are much more efficient than naive search, but they ignore higher-order relationships among tasks, such as combining more than two tasks. This is critical as higher-order task affinities can exhibit more complexities. We observe that they cannot be reliably predicted from first-order affinity scores (cf. Figure \ref{fig_transfer_pred}, Section \ref{sec_transfer_prediction}) as transfer relationships are not monotone or submodular, meaning that adding more task data does not necessarily help performance (see Figure \ref{fig_transfer_relations}, Section \ref{sec_taskrel}). Moreover, naively computing all pairwise affinities requires fitting $T^2$ models given $T$ tasks, which is costly even for tens of tasks.

This paper's main contribution is to design an efficient algorithm to cluster tasks into similar groups while accounting for higher-order transfer relationships. One can compare the performance of a multitask model, such as a GNN trained on all tasks, against several multitask GNNs, each trained for a task group.
Section~\ref{sec_experiments} shows that this approach yields the best results among a wide set of baselines on various real-world datasets. Our method can be viewed as a \emph{boosting} procedure~\cite{breiman2001random} and can be used on top of any graph learning algorithm.

\medskip
\noindent\textbf{Approach.} We outline the overall procedure; See also Figure \ref{fig_pipeline} for an illustration.
Given $T$ tasks, we first compute a {\em task affinity score} $\theta_{i,j}$ for every pair of tasks $i$ and $j$.
A higher value of $\theta_{i, j}$ indicates task $j$ transfers better to task $i$ while also accounting for the presence of other tasks.
{Conceptually, $\theta_{i,j}$ is similar to the feature importance score in random forests when hundreds of other features are available.}
This {\em higher-order} task affinity score can also predict whether a set of tasks transfer positively or negatively to a target task.
Given the affinity score matrix $[\theta_{i, j}]_{T\times T}$, we use a spectral clustering algorithm \cite{ng2001spectral,shi2000normalized} to separate tasks into similar groups, which is more suitable for joint training. 
Specifically, our algorithm optimizes the sum of task affinity scores within groups through spectral clustering. 

Next, we describe the steps to estimate the affinity score matrix. A naive approach is to compute each entry individually, requiring $O(T^2)$ complexity.
We design an efficient sampling procedure that only requires $O(T)$ complexity.
Concretely, we sample $n = O(T)$ random subsets from $\set{1, 2, \dots, T}$ of a fixed size $\alpha$ (in practice, $\alpha = 5$ suffices).
We fit an MTL model for each subset and evaluate its prediction loss for each task in the subset;
Let $f_i(S)$ denote the prediction loss of task $i$, given a subset $S \subseteq\set{1, 2, \dots, T}$, which we evaluate on a holdout set.
Thus, $f_i(S)$ measures the information transfer from $S$ to $i$.
Then, we compute $\theta_{i,j}$ as the average among all subsets including $i,j$:
\begin{align} \theta_{i, j} = \frac{1}{n_{i,j}} \Big(\sum_{1\le k\le n:\, \set{i, j}\subseteq S_k} f_i(S_k)\Big), ~~\text{ for all } 1\le i,  j\le T. \label{eq_aff} \end{align}
To rigorously justify the rationale behind our affinity scores, we conduct a theoretical analysis in a stochastic block model style setting, where tasks follow a well-separated structure.
We prove that under this planted model, the affinity score matrix $[\theta_{i, j}]_{T\times T}$  exhibits a block-diagonal structure, each block corresponding to one cluster.
This characterization shows that the spectral clustering algorithm recovers the underlying task structure.

\medskip
\noindent\textbf{Summary of Contributions.}
The contribution of this work is threefold. 
First, we design a task affinity score to measure higher-order task relationships and estimate the scores with an efficient sampling procedure, which only requires fitting $O(T)$ MTL models.
Second, we propose a spectral clustering step to find task groups based on the affinity score matrix.
We provide recovery guarantees for this clustering procedure and show that the affinity scores can be used to provably group related tasks in a planted model.
Third, we validate the benefit of our boosting approach using various community detection and molecular graph prediction datasets. The experimental results show that our approach improves test accuracy over different community detection and MTL baselines.

\medskip
\noindent\textbf{Organization.}
The rest of this paper is organized as follows. We first outline related work in Section~\ref{sec_related}. In Section~\ref{sec_observation},  we provide empirical grounding for the claim that accounting for negative transfer among tasks is crucial for MTL on graphs. Our boosting procedure is described in Section~\ref{sec_method}, followed by a thorough empirical study of its performance in Section~\ref{sec_experiments}. Finally, in Section~\ref{sec_theory}, we describe the theoretical analysis of our algorithm.
In Appendix \ref{app_proof}, we provide complete proofs for our theoretical results.
In Appendix \ref{sec_omitted_results}, we describe additional experimental results left from the main text.
\section{Related Work}
\label{sec_related}

\subsection{Modeling task relationships in MTL}
The importance of learning from a pool of disparate data sources is well-recognized in the data mining literature \cite{ben2002theoretical}.
However, naively combining several heterogeneous data sources can result in negative interference between their feature representations \cite{wu2020understanding}.
Researchers have designed methods to extract shared information from different tasks.
For instance, explicit regularization applied to the representations of all tasks can help encourage information transfer \cite{evgeniou2004regularized}.
These regularization schemes can be rigorously justified for convex hypothesis classes \cite{nie2018calibrated}.
For nonconvex hypothesis spaces such as graph neural networks, explicitly regularizing the feature spaces of all tasks is a non-trivial challenge \cite{yang2016deep,bai2022saliency}.
\citet{ma2018modeling} introduce a mixture-of-experts model to capture the task relationships, with each expert being an MTL network.
\citet{yu2020gradient} design gradient-based similarity measures that can be efficiently computed using the cosine similarity of gradients during training.
This can be extended to measure the similarity between two sets of tasks by averaging the gradient of tasks in each set.
Recent work \cite{li2022task,li2023identification} points out that first-order affinity measures deteriorate as a transferability measure when applied to a large set of tasks.

\medskip
\noindent\textbf{Task Grouping.}
Instead of sharing layers and model parameters across all tasks, \citet{kumar2012learning} proposes mitigating negative transfers by dividing tasks into several related groups.
Our paper takes inspiration from Datamodels \cite{ilyas2022datamodels}, which extrapolates the outcome of deep networks as influence functions.
In particular, \citet{ilyas2022datamodels} find that a linear regression method can accurately approximate the outcome of deep nets trained with a subset of samples on popular image benchmarks.
Our results (e.g., Figure \ref{fig_transfer_pred}) show that the affinity scores can also accurately predict transfer types in multitask learning.

\subsection{Transferable graph neural networks}

Graph neural networks have emerged as widely used tools for graph learning.
Ideally, we want to learn a powerful embedding for all downstream tasks \cite{lee2017transfer,verma2019learning,gong2019graphonomy,qiu2020gcc,goel2014connectivity,zhang2019pruning}.
\citet{zhu2021transfer} analyzes the transferability of GNN from graph $A$ to graph $B$ and highlights the complex correspondence between structural similarity and transferability between GNNs.
Besides GNN, researchers have also observed negative interference while applying graph embedding to perform transfer learning on Graphs \cite{gritsenko2022graph}.
\citet{ju2023generalization,ju2022robust} show non-vacuous generalization bounds for graph neural networks in the fine-tuning setting using Hessian.
Our paper expands on these prior works in two aspects.
First, we consider a multi-label learning setting involving as many as 1000 tasks, whereas the work of \citet{zhu2021transfer,gritsenko2022graph} focuses on transfer learning from graph $A$ to graph $B$.
Second, we consider multiple node prediction tasks on a single graph, which is different from graph pretraining (e.g., \citet{qiu2020gcc,hu2019strategies}) and graph algorithmic reasoning \cite{velivckovic2022clrs}.

\medskip
\noindent\textbf{Multitask Learning Applications for Graph-Structured Data.}
Combining multiple graph learning tasks jointly can potentially enhance the performance of single tasks.
Our results support this claim in the context of supervised overlapping community detection.
Besides, we believe many graph learning tasks can be cast into the multitask learning framework.
For instance, consider extracting entity relationships on knowledge graphs; Each entity relation may be viewed as one task.
\citet{wang2019multi} find that learning the dependencies of different relations through multitask representation learning can substantially improve the prediction performance.
There has also been some study on the trade-off between fairness and accuracy in MTL \cite{wang2021understanding}.
It is conceivable that the new tools we have developed may benefit these related applications.
This is a promising direction for future work.

\subsection{Overlapping community detection}

Identifying community structures is one of the most widely studied problems in network science \cite{fortunato2010community}.
A common approach to finding communities given a seed set is to measure the local connectivity of a subgraph using spectral graph properties (e.g., the conductance of a cut).
\citet{yang2013overlapping} describe an efficient algorithm using non-negative matrix factorization for finding overlapping communities.
\citet{whang2013overlapping} finds local clusters by identifying low conductance set near a seed.
These approaches use the connectivity of edges to compute spectral properties.
Besides, higher-order structures from hypergraphs are found to be useful for overlapping community detection \cite{benson2016higher,yin2017local}.
Lastly, community detection can be cast in the correlation clustering framework, which does not require specifying the number of communities \cite{bonchi2022correlation, demaine2006correlation,ailon2008aggregating,mandaglio2021correlation}.

Our paper is closely related to the work of \citet{chen2017supervised}.
The difference is that we formulate the problem of predicting community labeling via multitask learning, whereas \citet{chen2017supervised} consider a multi-class classification setup.
Our formulation is more suitable when we are dealing with a large number of overlapping communities.
This is a novel perspective on community detection to the best of our knowledge.
Our results, compared with strong baselines including VERSE embedding \cite{tsitsulin2018verse} and BigClam \cite{yang2013overlapping}, suggest that modeling higher-order task relationships can significantly improve empirical performance for multitask learning.

\section{Investigating Task Relationships}\label{sec_observation}

We investigate task relationships in the setting of overlapping community detection.
We demonstrate that negative transfer is widespread across tasks and persists in large models.
We show that task relationships are neither monotone nor submodular in the higher-order regime.
Motivated by these considerations, we propose a task grouping problem for conducting MTL on graphs.

\subsection{Setup and background}

\begin{figure*}[t!]
    \begin{minipage}[b]{0.23\textwidth}
        \centering
        \includegraphics[width=0.9\textwidth]{./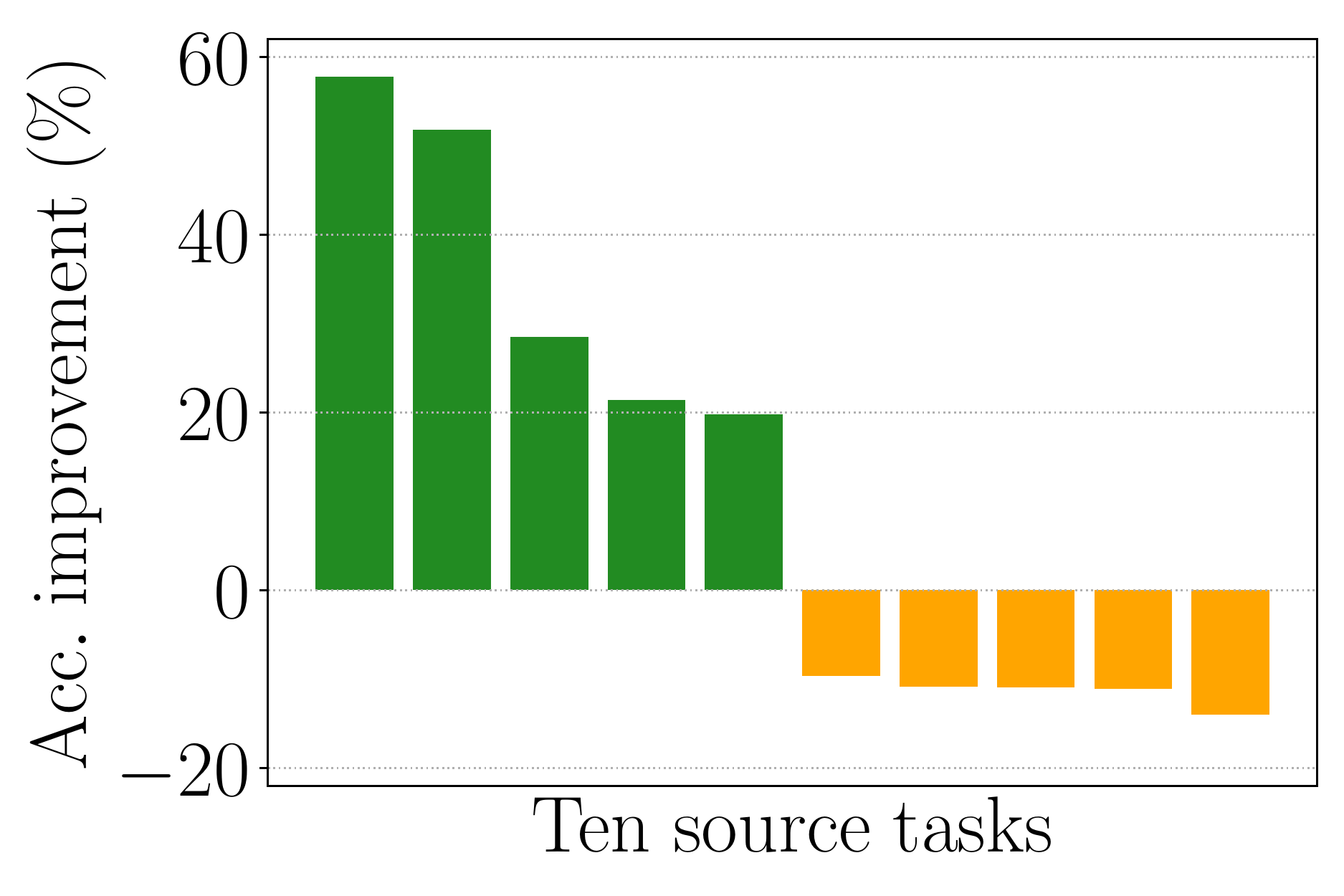}
    \end{minipage}\hfill
    \begin{minipage}[b]{0.23\textwidth}
        \centering
        \includegraphics[width=0.9\textwidth]{./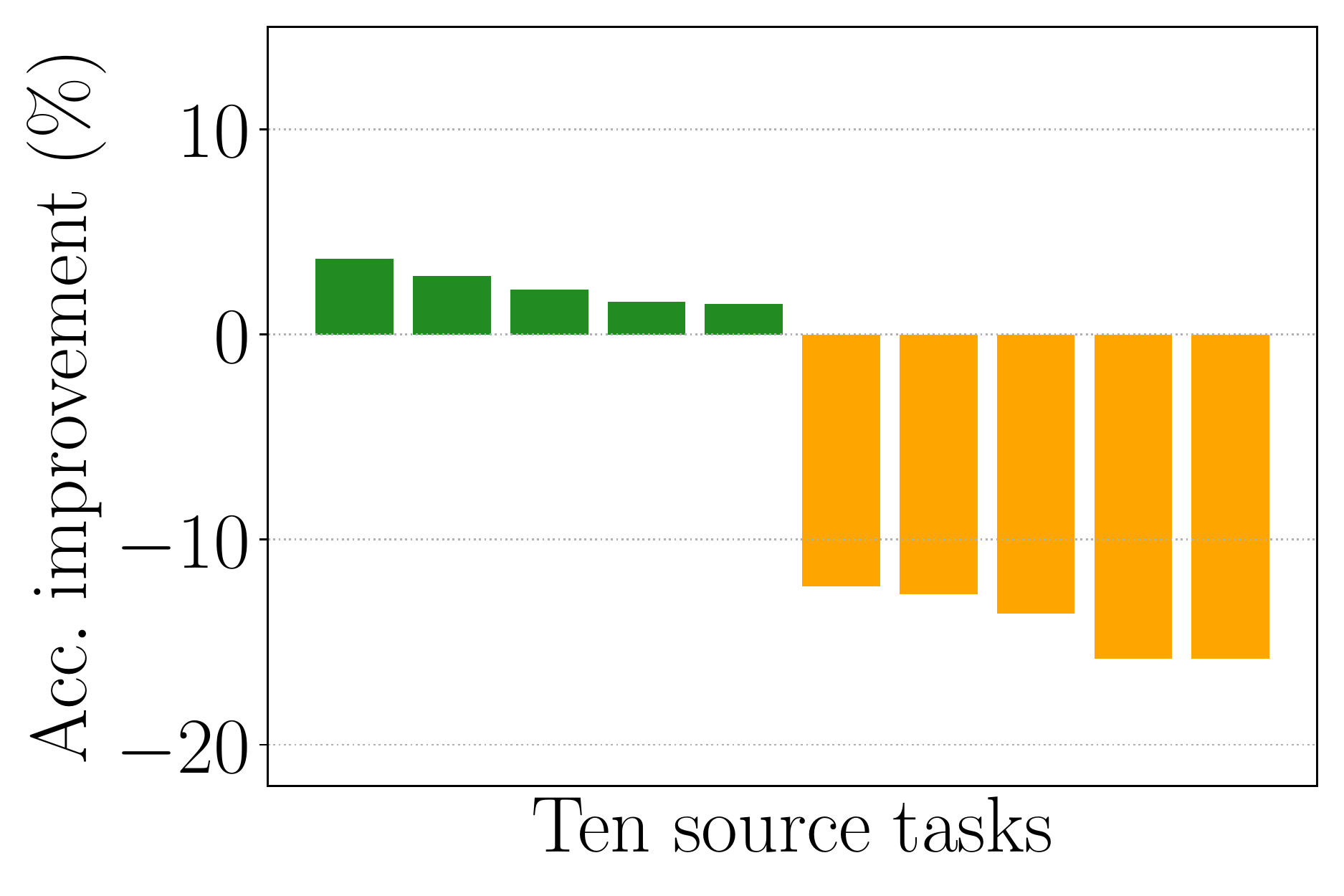}
    \end{minipage}\hfill
    \begin{minipage}[b]{0.23\textwidth}
        \centering
        \includegraphics[width=0.9\textwidth]{./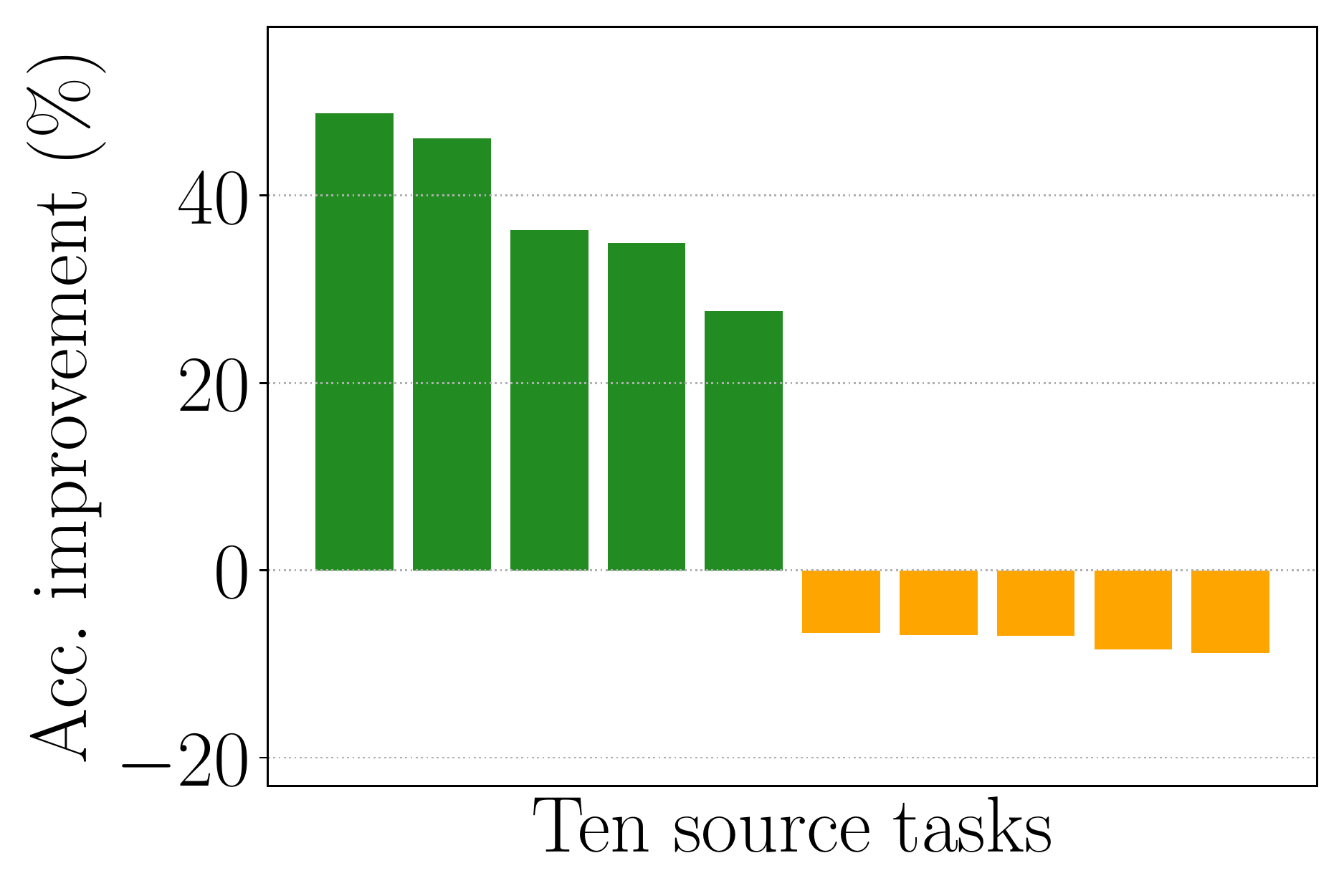}
    \end{minipage}\hfill
    \begin{minipage}[b]{0.23\textwidth}
        \centering
        \includegraphics[width=0.9\textwidth]{./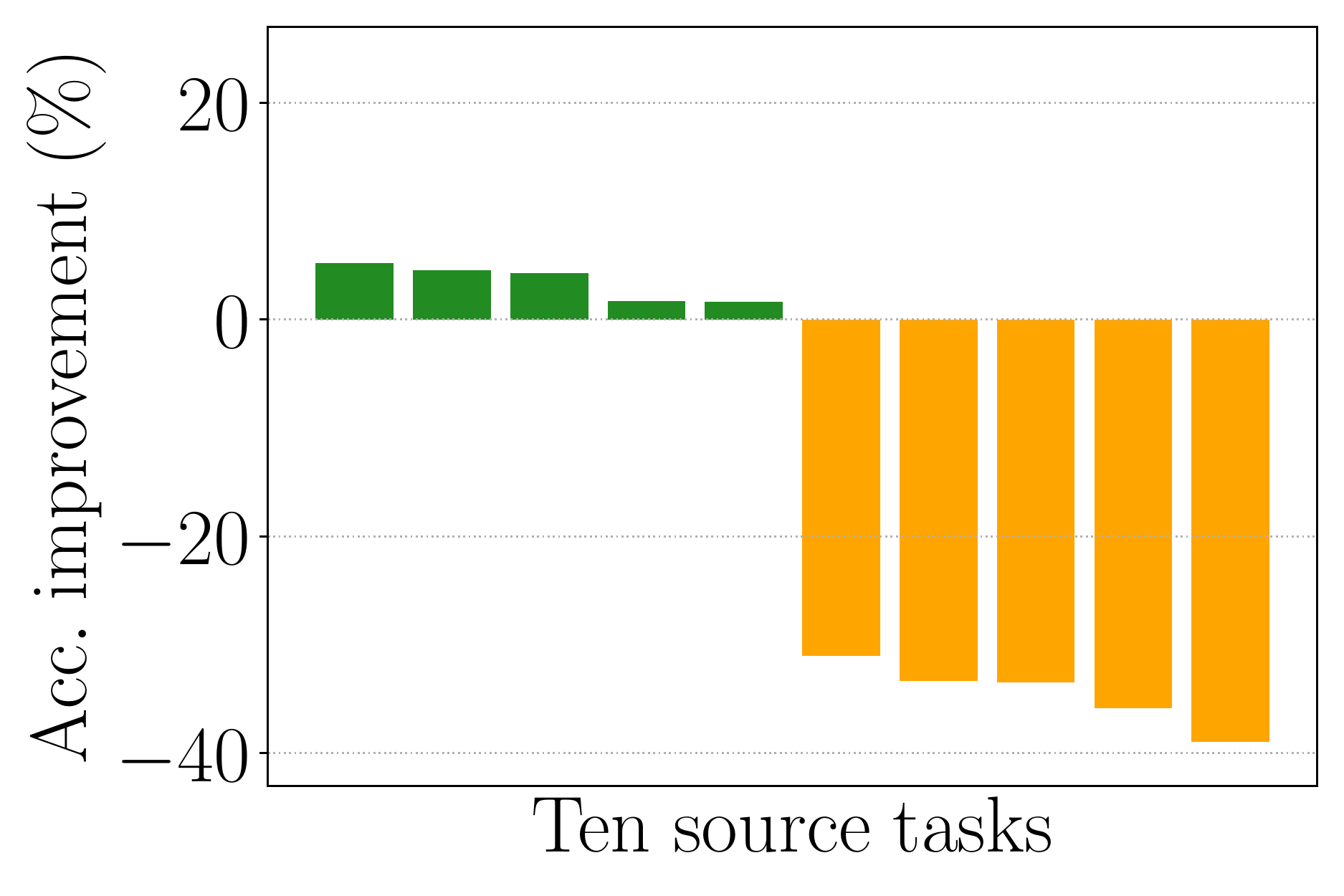}
    \end{minipage}
    \caption{This figure illustrates the widespread negative transfer effect among tasks by noting that MTL performance can dip below STL for four separate (randomly selected) target tasks. We fix a target task $i$ for each plot, then randomly pick ten source tasks (out of $100$) and for each source task $j$ train an MTL with $i$ and $j$; we report the MTL accuracy for $i$ minus $i$'s STL accuracy. Thus, bars above zero indicate positive transfers from source to target tasks, while bars below zero indicate negative transfers.}
    \label{fig_pairwise_transfer}
\end{figure*}

\begin{figure}[b!]
    \centering
     \begin{minipage}[b]{0.23\textwidth}
        \centering
        \includegraphics[width=0.99\textwidth]{./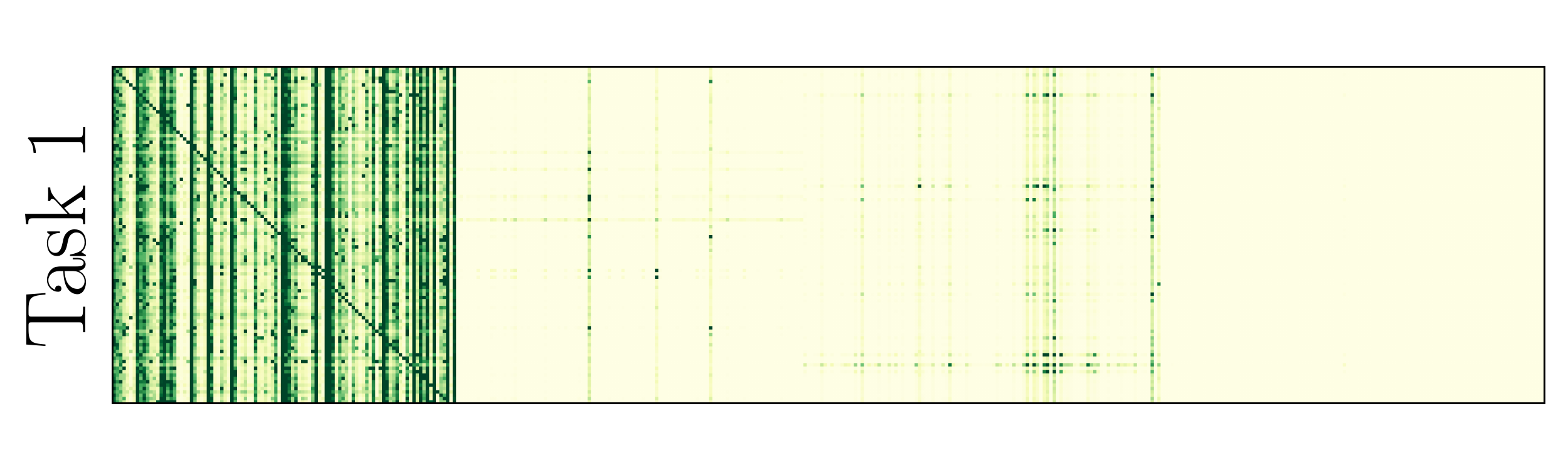}
    \end{minipage}
     \begin{minipage}[b]{0.23\textwidth}
        \centering
        \includegraphics[width=0.99\textwidth]{./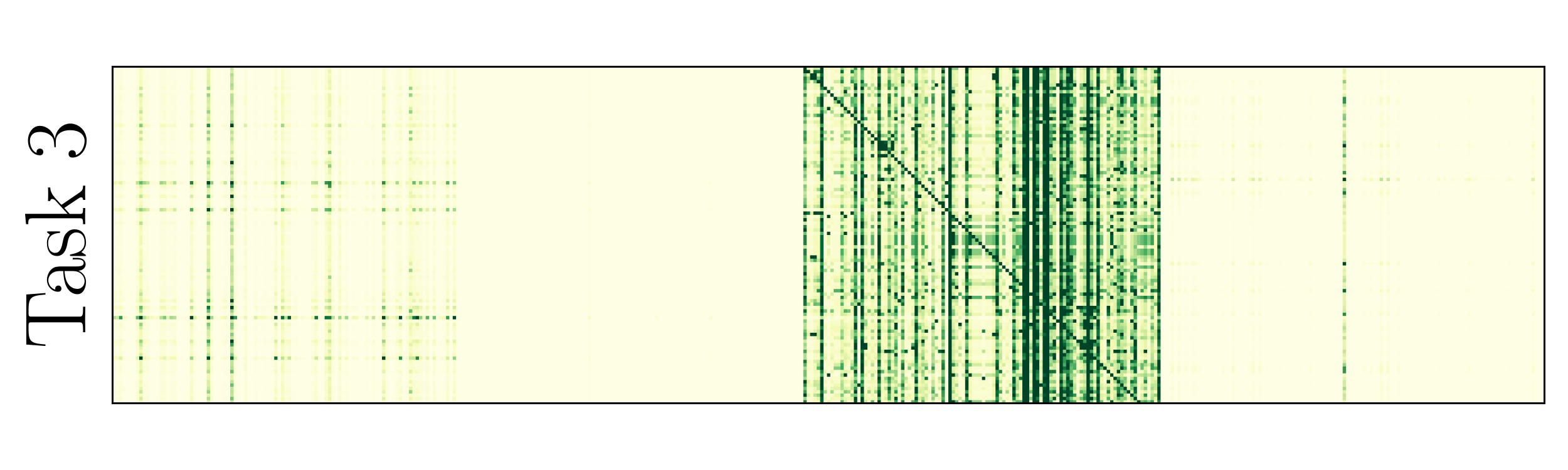}
    \end{minipage}
     \begin{minipage}[b]{0.23\textwidth}
        \centering
        \includegraphics[width=0.99\textwidth]{./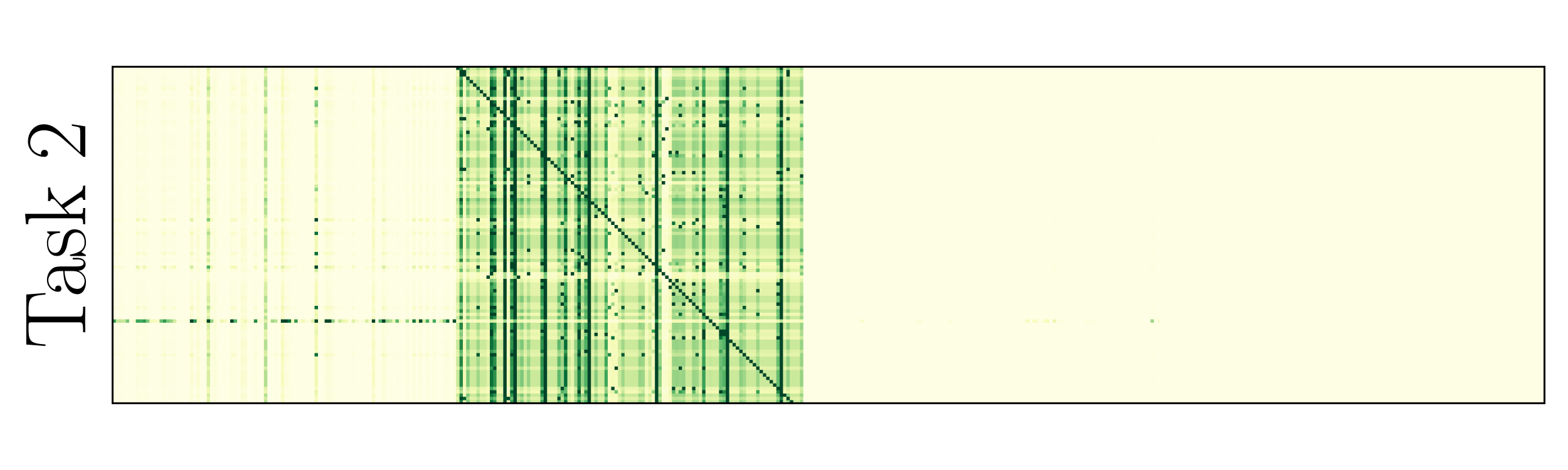}
    \end{minipage}
    \begin{minipage}[b]{0.23\textwidth}
        \centering
        \includegraphics[width=0.99\textwidth]{./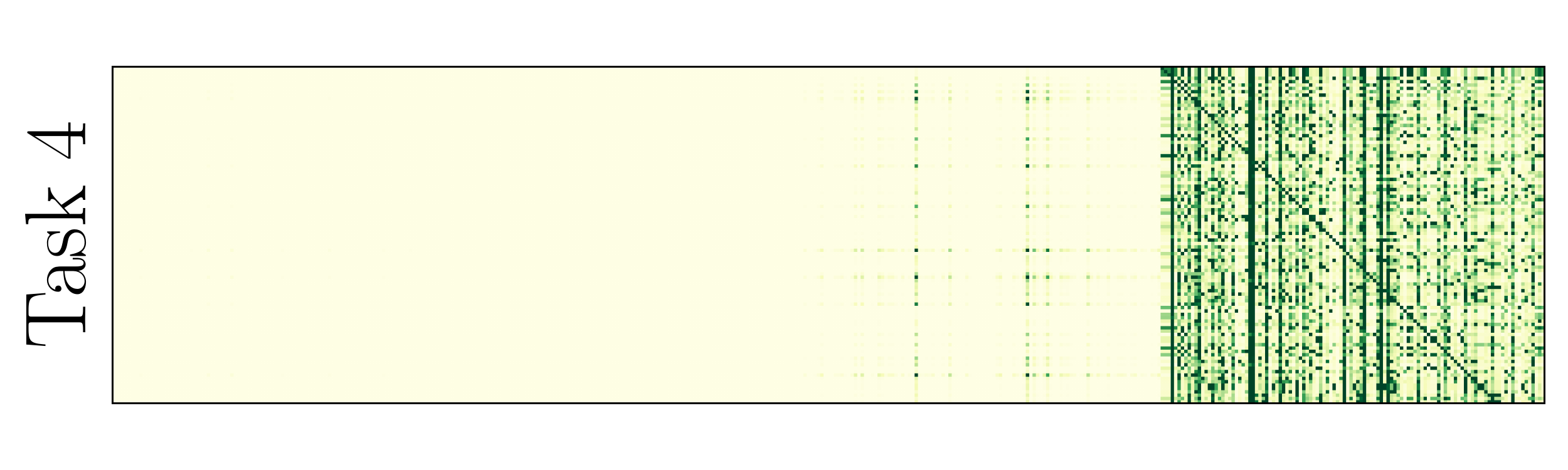}
    \end{minipage}
    \vspace{-0.1in}
    \caption{In each subfigure, we visualize the personalized PageRank vectors of a set of nodes in one community. They differ dramatically across non-overlapping communities.}
    \label{fig_propagation}
\end{figure}

\noindent\textbf{Problem setup.} We conduct an empirical study using multiple overlapping community detection tasks as a concrete example.
Given a graph $G = (V, E)$, we have a list of $T$ communities as subgraphs of $G$.
Let $C_1, C_2, \dots, C_T$ denote the vertex set of these communities.
We are given a vertex subset of each community during training as seed sets.
For every $i = 1, 2, \dots, T$, deciding whether a node $v \in V$ belongs to $C_i$ is a binary classification task.
Note that this formulation differs from supervised community detection \cite{chen2017supervised} but is more suitable for overlapping community detection.
This formulation is an example of multi-label learning, a particular case of multi-task learning (see, e.g., Fig. 1b of \citet{zhang2021survey}).
Casting multi-label learning into this more general formulation provides new perspectives in solving the problem.
\begin{itemize}[leftmargin=*]
    \item The prediction of membership in $C_i$ is task $i$, which is a binary classification task, given $G$ and node features.
    \item There are $T$ tasks, one for each community.
\end{itemize}

\smallskip
\noindent\textbf{Datasets.}
We use social network datasets with known community labels, including the Amazon, YouTube, DBLP, and LiveJournal networks from SNAP \citep{yang2012defining}. We use the 100 largest communities from each network and keep the subgraph induced by the nodes in the 100 communities. 
For each community detection task, we randomly sample 10\% of the nodes from the community in the training set, together with 10\% of nodes outside the community as negative samples. We randomly sample another 20\% of nodes as the validation set and treat the rest as a test set. We evaluate the performance of each task using the F1 score on the test set. See Table \ref{tab_community_detection} for details statistics of the four datasets.

\medskip
\noindent\textbf{Models.}
We consider an MTL model that consists of a single encoder to obtain shared representations and a separate prediction layer for each task.
We train this model by minimizing the average loss over all tasks' training data. 
Our experiments in this section use the SIGN model \citep{frasca2020sign} as the encoder, which is more efficient to train than GCN.
The encoder involves 3 layers, each with a fixed width of 256 neurons. Our choice of this encoder is without loss of generality, and our observations also apply to other encoders. 
We construct the node features from the VERSE embedding \cite{tsitsulin2018verse}, which encodes personalized PageRank vectors known as valuable features for community detection \cite{andersen2006communities}.

\medskip
\noindent\textbf{Negative transfer on graphs.} 
A common phenomenon with multitask learning is negative transfer \citep{pan2010survey}, meaning that combining one task with another worsens performance compared with training a task separately. 
We show that negative transfer occurs during MTL on graphs.
We take $100$ tasks from the YouTube dataset. First, we fix a randomly chosen task $i$ as the target task and use the rest as source tasks. Then, we train a GNN for task $i$, and 99 MTL models, each combining one source task with task $i$. The performance gap between the MTL and STL models indicates the transferability from a source task to task $i$.

Figure \ref{fig_pairwise_transfer} above shows the results of this experiment, repeated over four randomly chosen target tasks. The bars above zero correspond to \emph{positive transfers} as MTL performance exceeds STL, while bars below zero correspond to \emph{negative transfers}. We observe that both positive and negative transfers appear in all four settings.

\medskip
\noindent\textbf{Structural differences.} Why do negative transfers happen during multitask learning on graphs?
A common belief in the MTL community is that this is due to differences in the task labels \cite{wu2020understanding,yang2020analysis,li2023identification}.
We argue that graph neural networks involve another kind of heterogeneity due to graph diffusion. 
We appeal to a connection between GNN propagation and personalized PageRank (PPR) \cite{klicpera2018predict,bojchevski2020scaling,chen2020scalable}, positing that dramatically different PPR structures among communities will induce different graph diffusion for GNNs.
In Figure \ref{fig_propagation}, we visualize the PPR vectors of four randomly chosen tasks from the YouTube dataset.
Within each subfigure, each row corresponds to the PPR vector of one node that belongs to a particular community. We plot the PPR vectors of a set of nodes from the same community.
Clearly, PPR vectors differ dramatically for nodes from different communities, suggesting that the diffusion processes are highly heterogeneous.
We also observe that tasks yield positive transfers tend to have higher similarity between their PPR vectors. Detailed results are described in Appendix \ref{sec_ppr_similarity}.

\medskip
\noindent\textbf{Will larger models address negative transfers?} %
A natural approach to addressing negative transfers is to increase the model size, but this does not account for the above structural differences. We hypothesize that due to innate data heterogeneity, negative transfers between tasks cannot be addressed by increasing the model capacity. 
To verify this, we use the first target task from Figure \ref{fig_pairwise_transfer} and select the source task in the rightmost bar with the strongest negative transfer.
We gradually increase the number of neurons in the hidden layer {from 32, 64, 128, 256, 512, 1024, to 2048}, corresponding to larger model capacities.
Figure \ref{fig_model_capacity} shows the results.
We observe consistent negative transfers, i.e., the accuracy improvements stay negative, after increasing the model capacity. 
We have also observed the same result on a more powerful GNN with attention, i.e., GAMLP \citep{zhang2022graph}. See Appendix \ref{sec_model_size_GAMLP} for these results.

\begin{figure*}[t!]
    \begin{subfigure}[b]{0.33\textwidth}
        \centering
        \includegraphics[width=0.70\textwidth]{./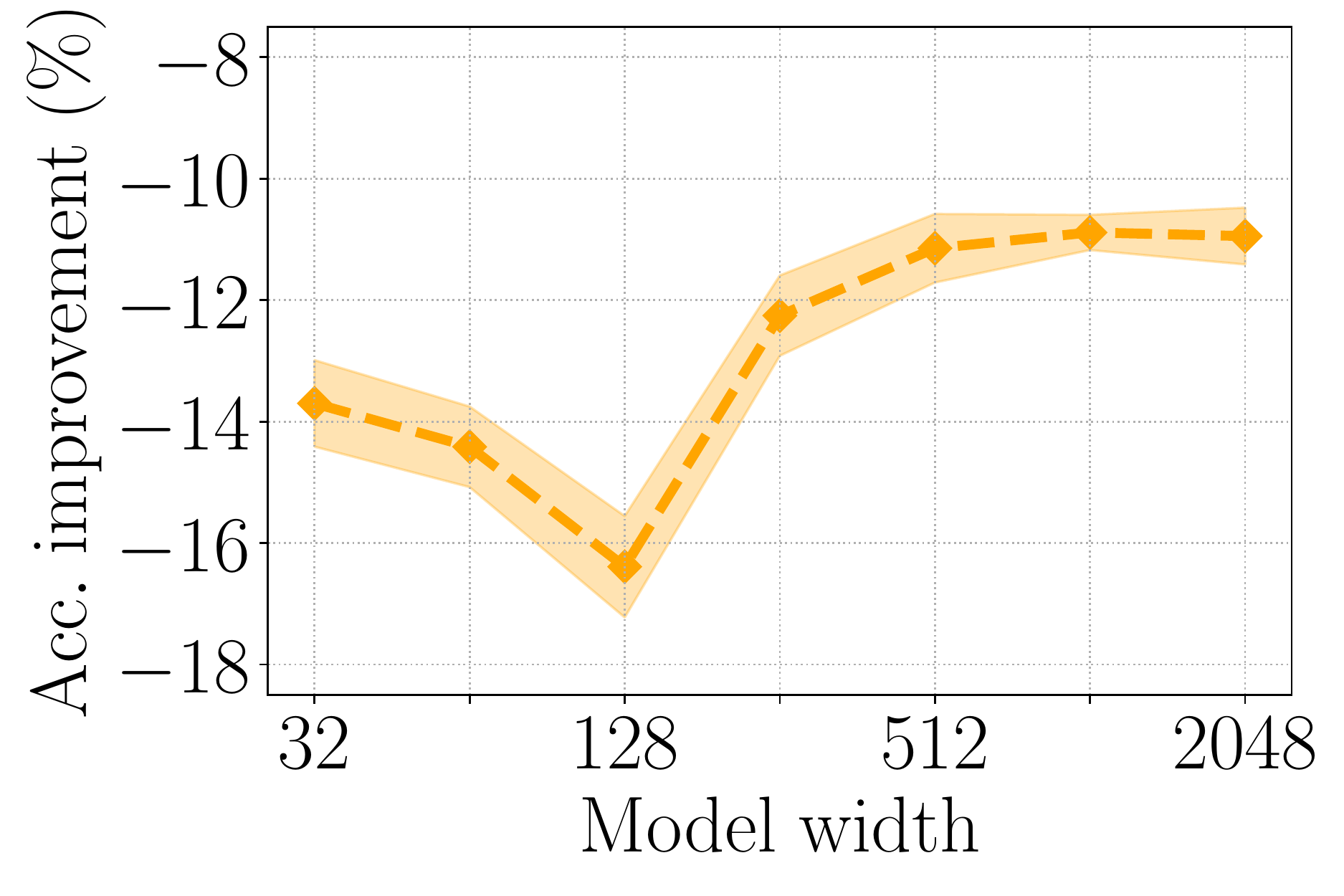}
        \vspace{-0.05in}
        \caption{Varying model size}\label{fig_model_capacity}
    \end{subfigure}
    \begin{subfigure}[b]{0.33\textwidth}
        \centering
        \includegraphics[width=0.7\textwidth]{./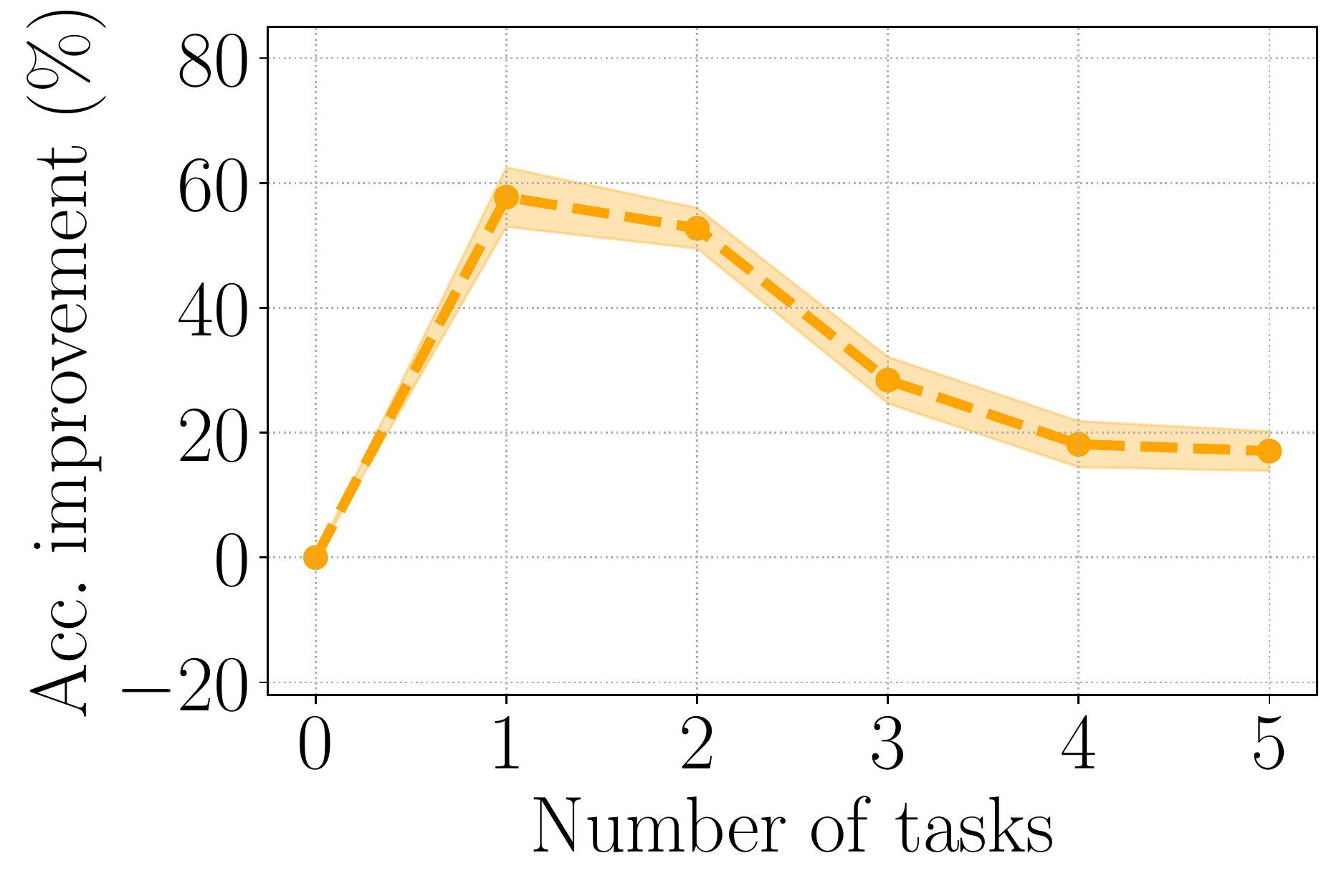}
        \vspace{-0.05in}
        \caption{$f$ is not monotone}\label{fig_mon}
    \end{subfigure}
    \begin{subfigure}[b]{0.33\textwidth}
        \centering
        \includegraphics[width=0.7\textwidth]{./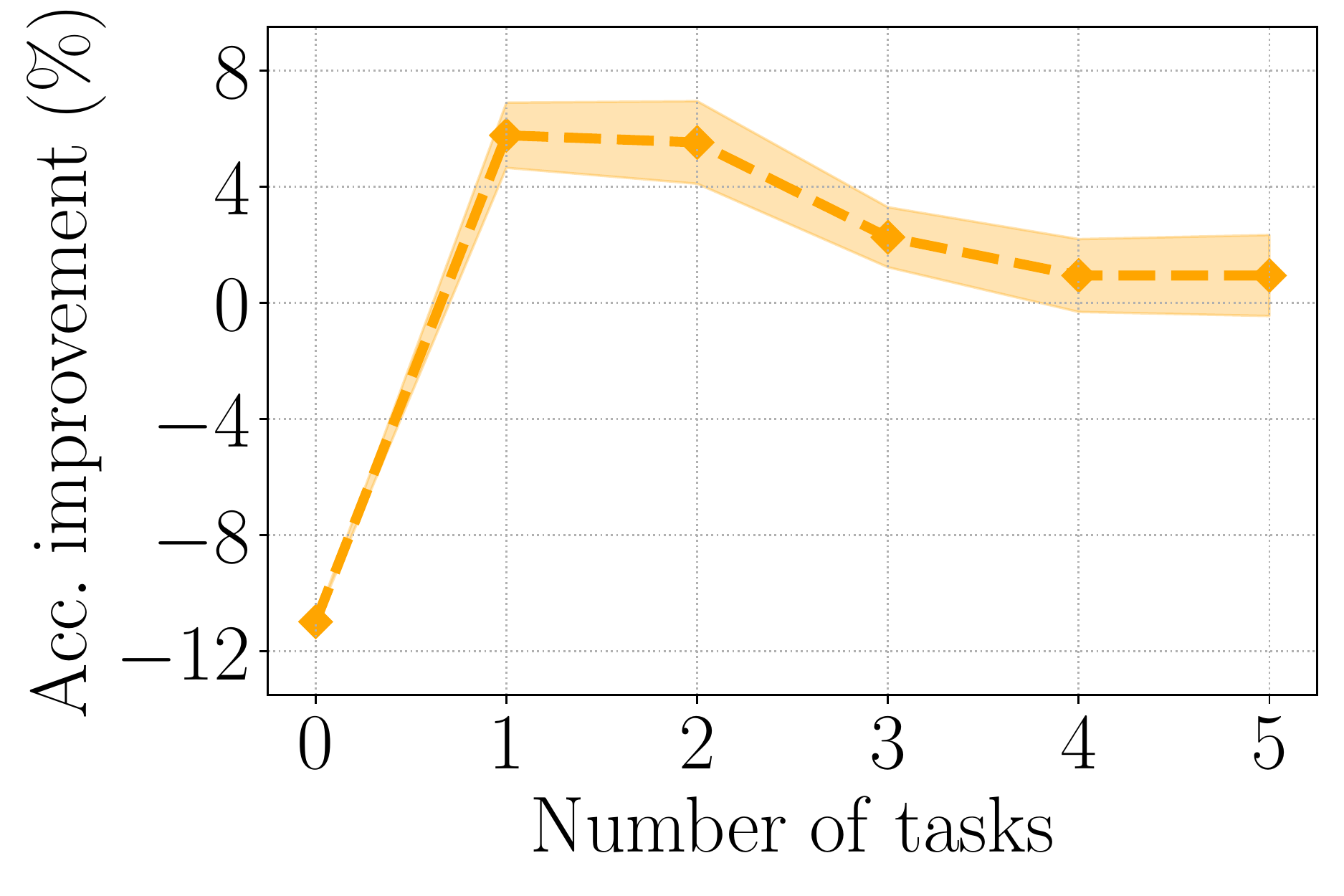}
        \vspace{-0.05in}
        \caption{$f$ is not submodular}\label{fig_sub}
    \end{subfigure}
    \vspace{-0.2in}
    \caption{
    (\ref{fig_model_capacity}) We show a consistent negative transfer even after increasing the model size (measured by width).
    (\ref{fig_mon}) The $x$-axis refers to the number of added source tasks to train with the target task. The $y$-axis refers to the difference in performance between MTL and STL (with the target task alone). We observe that the MTL performance of a target task starts to decrease after adding two or more source tasks, even though these are all ``positive'' source tasks (in the sense of pairwise transfer).
    (\ref{fig_sub}) Under the presence of a negatively interfering source task, the benefit of adding more ``positive'' tasks diminishes, implying that the $f(\cdot)$ function is not submodular.}
    \label{fig_transfer_relations}
\end{figure*}

\subsection{How do task relationships behave?}\label{sec_taskrel}

Next, we study the multitask learning performance involving more than two tasks.
At the extreme, this would involve $2^T$ combinations of task subsets.
To be precise, given any subset of tasks $S \subseteq \set{1, \ldots, T}$, let $f_i(S)$ denote the MTL performance of combining data from all tasks in $S$, evaluated on task $i$, for each $i \in S$.

\smallskip
\noindent\textbf{Q1: Is $f$ monotone?}
To better understand how $f$ behaves, we pick a target task $t$ and measure $f_t(\set{t})$.
Then, we add new tasks to be combined with $t$.
We only add a task $i$, if task $i$ is beneficial for task $t$, i.e., $f_t(\set{t, i}) \ge f_t(\set{t})$. %
Figure \ref{fig_mon} shows the result of applying the above setting to the first target task.
We observe that after adding more than two positive source tasks, the MTL performance decreases. 
This shows that $f_t(\cdot)$ is not monotone.

\medskip
\noindent\textbf{Q2: Is $f$ submodular?}
A function $f(\cdot)$ is submodular if for any two subsets $S \subseteq S' \subseteq\set{1, 2, \dots, T}$ and any single task $x$, $f(\{x\} \cup S') - f(S') \le f(\{ x \} \cup S) - f(S)$. 
We pick one negative source task as $x$ and ask if adding positive source tasks mitigates the negative transfer.
In Figure \ref{fig_sub}, we find that adding positive tasks does not always help, which implies that $f$ is not submodular.
The scales of $f$ are also different compared to Figure \ref{fig_mon}, because the presence of the negative task reduces the effect of positive tasks.
The takeaway is that $f$ is not monotone or submodular, motivating our approach to extrapolate $f$ via sampling.

\subsection{Task grouping for multitask graph learning}

We aim to obtain a set of networks where each network is trained on a subset of tasks. The objective is to optimize the overall performance of all tasks after combining the networks. 
To approach this problem, we consider finding subsets of tasks, namely task groups, to maximize the positive transfers of tasks within each group. %
We want to divide $\set{1, 2, \dots, T}$ into possibly overlapping subsets of tasks.
Let $\cS$ denote the collection of subsets.
Given $\cS$, the performance of task $i$ is the highest one among all networks:
\[ \cL_i(\cS) = \max_{X \in \cS} f_i(X). \]
Thus, the overall performance of all tasks on a solution is $\sum_{i=1}^{T} \cL_i(\cS)$. 
Suppose there is a limited inference budget $b$, the number of MTL models we can deploy in inference.
We want to find at most $b$ groups that achieve the highest performance: $\sum_{i=1}^T \cL_i(\mathbf{S})$.

To address this problem, we need to evaluate the multitask learning performance for all subsets of tasks, which total $2T$ combinations. 
Because task relationships are highly non-linear, we need a more efficient procedure to capture transfer relationships.
More generally, finding the optimal task groups is NP-hard via a reduction from the set-cover problem (see, e.g., \cite{standley2020tasks}). %

\section{Our Approach}\label{sec_method}

We now present our approach to optimize multitask model performance through grouping tasks that utilize higher-order task affinities. Recall the steps in our pipeline from Figure~\ref{fig_pipeline}:
(1,2) Repeatedly sample a random subset of tasks, and evaluate the model's performance that combines the tasks in each subset.
(3) Average the multitask learning performances over subsets that involve two specific tasks, yielding the task affinity score. 
(4) Then, we use these task affinity scores to group tasks using a spectral clustering algorithm on the matrix of task affinity scores.

\subsection{Estimating higher-order task affinities}

\noindent\textbf{Notations.} 
Suppose we are given a graph $G = (V, E)$ where $|V| = N$ and $|E| = M$, with node features $X \in \real^{N\times d}$. 
There are $T$ semi-supervised tasks on the graph. For each task $i$, we are given a set of nodes $\hat{V}^{(i)}$ with known labels ${Y}^{(i)}$. The goal of task $i$ is to predict the labels for the rest of the nodes on the graph $V/\hat{V}^{(i)}$. Note that the set $\hat{V}^{(1)}, \ldots, \hat{V}^{(T)}$ can be either overlapped or disjoint with each other. The objective is to optimize the average prediction performance over the $T$ tasks.

Let $\phi$ be the encoder network shared by all tasks. Let $\psi_1, \ldots, \psi_T$ be prediction layers for tasks $1, \ldots, T$ that map feature vectors to task outputs. When the input is a graph, we consider using graph neural networks as the encoder $\phi$.
Given $S\subseteq \set{1,2,\dots,T}$, let $\phi^{(S)}$ and $\psi^{(S)}_i$, for $i \in S$, be the trained model on the combined dataset of $S$.
We evaluate the prediction loss on each task's validation dataset.
Let $\widetilde{V}^{(i)}$ denote a set of nodes in $V$, which is used as a validation set for the task $i$.
Let $\ell$ be an evaluation metric, e.g., the cross-entropy loss.
We define multitask learning performance for any $i\in S$ as:
\begin{align}
    f_i(S) = \frac{1}{|\widetilde{V}^{(i)}|} \sum_{v \in \widetilde{V}^{(i)}} \ell\Big(\psi_i^{(S)}\big(\phi^{(S)}(X_v; G)\big), Y^{(i)}_{v}\Big) \label{eq_ft}
\end{align}%

\smallskip
\noindent\textbf{Measuring task affinity scores.}
Our approach measures higher-order task affinities to indicate how well a task transfers another task when combined in a subset. 
We show that such task affinity measures can be estimated by training $n$ models, where $n$ only needs to grow linearly to the number of tasks $T$. 
Moreover, our measure gives a more accurate prediction of higher-order multitask transfer results than previous notions of task affinity. 

We view task affinity as a transferability measure from source to target tasks.
Given a task $i \in \set{1, \ldots, T}$ as a target task, 
denote the affinity of another task $j$ to $i$ as  $\theta_{i,j}$. 
To model the relations of higher-order transfers. we define the task affinity score $\theta_{i,j}$ as the average MTL prediction loss $f_i(S)$ on target task $i$ over subsets that contain both task $i$ and $j$. 
We emphasize that the affinity scores account for the presence of other tasks.
Also note that a higher value of $\theta_{i,j}$ indicates higher usefulness of task $j$ to task $i$.

We estimate the task affinity scores through a sampling approach. Conceptually, this is analogous to graph embedding methods that optimize embeddings to approximate proximity scores. Similarly, we sample random subsets from tasks $1$ to $T$ and estimate the task affinity scores on the sampled subsets using this procedure:

\begin{enumerate}%
    \item Sample $n$ subsets from tasks $1$ to $T$, denoted as $S_1, \ldots, S_n$. We sample each subset from the uniform distribution over subsets with size $\alpha$.
    In other words, among all subsets of $\set{1, 2, \dots, T}$ with size $\alpha$ (note there are $\binom{T}{\alpha}$ of them), we pick one uniformly at random, with probability $1 / \binom{T}{\alpha}$.
    \item Evaluate prediction loss $f_i(S)$ for every task $i \in S_k$ and every subset $S \in \set{S_1, \ldots, S_n}$ by training a multitask model on $S$. 
    \item Estimate the task affinity scores $\theta_{i,j}$ by averaging the MTL performances $f_i$ over subsets containing both task $i$ and $j$:
    \begin{align}\label{eq_fit_scores}
        \theta_{i,j} = \frac{1}{n_{i, j}} \Big(\sum_{1 \leq k \leq n: \set{i, j} \subseteq S_k} f_i(S_k)\Big).
    \end{align}
    where $n_{i, j}$ is the number of subsets that contain tasks $i, j$.
    In particular, when $i$ and $j$ are the same, we set $\theta_{i, i}$ as the average of $f_i(S)$ over all $S$ having $i$.
\end{enumerate}
To summarize, the above procedure yields a $T$ by $T$ affinity score matrix, denoted as $\bm{\Theta} = [\theta_{i, j}]_{T\times T}$.

\subsection{Finding task groups by spectral clustering}

Since the affinity scores serve as a proxy of higher-order task relationships, we optimize MTL performance by finding task groups with the highest task affinity scores within each group. Our task grouping algorithm, as described below, contains two major steps.
The complete procedure is given in Algorithm \ref{alg_task_grouping}. 

\begin{algorithm}[h!]
\caption{Task Grouping Using Higher-Order Task Affinities}\label{alg_task_grouping}
\raggedright
\textbf{Input}: $T$ tasks; Training and validation sets of the task.\\
\textbf{Require}: The size of each subset $\alpha$; Number of sampled subsets $n$; Inference budget $b$; Multitask learning algorithm $f$. \\
\textbf{Output}: $b$ trained multitask models. \\
\begin{algorithmic}[1] %
    \STATE For $k = 1, \dots, n$, sample a random subset $S_k$ from $\set{1,2,\dots,T}$ with size $\alpha$; evaluate $\bm{f}(S_k)$ following equation \eqref{eq_ft}.
    \STATE Estimate the task affinity scores ${\bm{\Theta}}$ following equation \eqref{eq_fit_scores}.
    \STATE Generate task groups $\mathbf{S}^{\star} = \set{S_1, \ldots, S_b}$ by applying spectral clustering on a symmetric matrix constructed from $\bm{\Theta}$.
    \STATE Train $b$ multitask models for each task group $S_1, \ldots, S_b$.
\end{algorithmic}
\end{algorithm}

First, we construct a transformed affinity score matrix for clustering. 
Since the sum of affinity scores between two tasks $i$ and $j$ within a group is $(\theta_{i,j} + \theta_{j,i})$, we define a symmetric matrix $\bm{A}_1 = (\bm{\Theta} + \bm{\Theta}^\top)/2$. 
Additionally, we find auxiliary source tasks for each group that yield positive transfer to the group. This is achieved by viewing the matrix $\bm{\Theta}$ as directional task relationships, with source tasks represented along the rows and target tasks along the columns.
To find a set of source tasks that yield the highest affinity scores to a set of target tasks, it is natural to consider the symmetrized matrix $ \left[\begin{smallmatrix}
 \bm{0} & \bm{\Theta} \\
 \bm{\Theta}^{\top} & \bm{0}
\end{smallmatrix}\right]$.
Thus, based on the affinity score matrix $\Theta$, we construct a symmetric matrix: 
$$\bm{A} = \left[\begin{matrix}
 \bm{A}_1 & \bm{\Theta} \\
 \bm{\Theta}^{\top} & \bm{0}
\end{matrix}\right].$$%

Second, we apply spectral clustering algorithms (e.g., \citet{ng2001spectral,shi2000normalized}) on $\bm{A}$ and merge the clustered target and source tasks in one group of final task groupings. 
Afterward, we train one multitask model for each group by combining all the data from that group.

{}
\section{Experiments}\label{sec_experiments}

We now evaluate our approach empirically on various community detection and molecular graph data sets.
First, we show that our task affinity scores can be estimated efficiently to predict negative transfers more accurately than first-order task affinities. 
Second, we apply our approach to overlapping community detection tasks on several datasets with ground-truth community labels: our approach outperforms the naive MTL by \textbf{3.98\%} and task grouping baselines by \textbf{2.18\%}.
Third, we evaluate our approach on molecular graph prediction tasks and show a \textbf{4.6\%} improvement over prior MTL methods. 
Lastly, we provide ablation studies to show that our approach is stable under various settings. {{The code for reproducing our experiments is available at \url{https://github.com/VirtuosoResearch/boosting-multitask-learning-on-graphs}.}}

\subsection{Results for predicting negative transfers}\label{sec_transfer_prediction}

\noindent\textbf{Experiment setup.} 
We use task affinity scores $\theta_{i, j}$ for predicting negative transfers as follows. Given a target task $i$ and a subset of tasks $S$ containing $i$, we predict whether the subset $S$ transfers negatively to a task $i$, i.e., the MTL prediction loss $f_i(S)$ of training task $i$ with subset $S$ is worse than the STL loss of task $i$.

We set up the prediction as binary classification.
For each task $i$, input feature for a subset $S$ is the task affinity scores of tasks in $S$ to task $i$: $\mathbbm{1}_S \circ [\theta_{i, 1}, \ldots, \theta_{i, T}]$.
The label is whether the subset $S$ transfers negatively to a task $i$. 
Then, we fit a logistic regression model that maps the features to the binary labels. 
We fit $T$ models in total and evaluate the average F1-score over the $T$ models. 

We evaluate the above prediction on the YouTube dataset with $T = 100$ tasks and estimate task affinities of order $5$, $10$, and $20$ (which means that the size of sampled subsets is $\alpha = 5$, $10$, or $20$). We use the
transfer results on $n=2000$ task subsets to fit logistic regression models and evaluate the predictions on 500 new task subsets that do not appear in training.

\begin{figure*}[t!]
    \begin{minipage}[b]{0.99\textwidth}
        \centering
        \includegraphics[width=0.9\textwidth]{./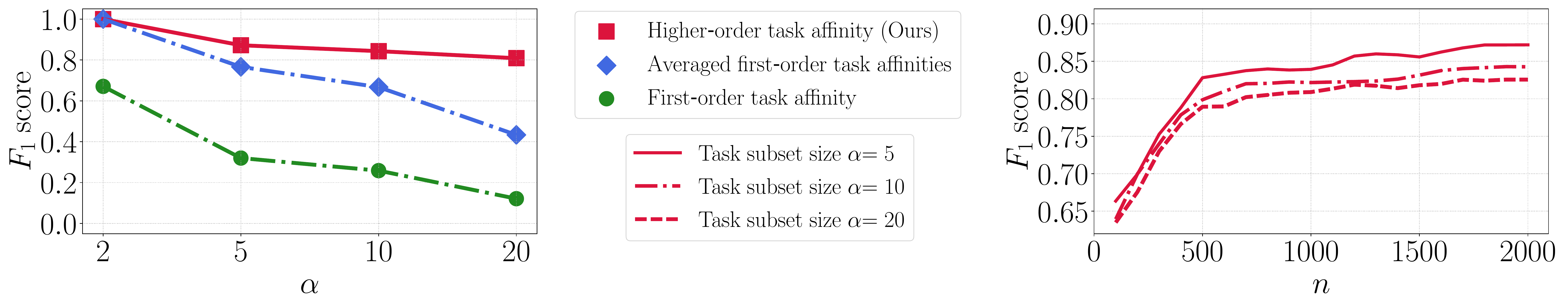}
    \end{minipage}
    \vspace{-0.1in}
    \caption{
    We use task affinity scores from tasks in a subset $S$ to task $i$ to predict whether training with subset $S$ decreases the STL performance of task $i$. 
    Left:  Compared with two first-order task affinity scores, our higher-order task affinity scores achieve consistently better F1-score for predicting negative transfers of combining up to $\alpha = 20$.
    Right:  The F1-score for predicting negative transfers converges when the sampled subsets $n$ reach $2000$. Results consistently hold for different subset sizes.}
    \label{fig_transfer_pred}
\end{figure*}

\smallskip
\noindent\textbf{Results.} 
First, we illustrate the convergence of  
$F_1$-score of negative transfer prediction when increasing the sample size $n$, as shown in the right of Figure \ref{fig_transfer_pred}.  
We observe that with $n \leq 2000 = 20T$, using higher-order task affinity scores predicts negative transfers with $F_1$-score above 80\%.
This result consistently holds for sampling subsets of different sizes. 

Second, we compare our approach to two previous notions of affinity scores. 
One involves computing first-order task affinity through the effect of one task's gradient on another task's loss \cite{fifty2021efficiently}. Another approximates the higher-order task transfers by averaging the first-order task affinity scores \cite{standley2020tasks}. 
Figure \ref{fig_transfer_pred} on the left shows that the $F_1$-score from previous measures gradually gets worse; Ours are accurate for subsets of size ranging from $2$ to $20$.

\begin{figure}[t!]
    \begin{minipage}[b]{0.4\textwidth}
        \centering
        \includegraphics[width=0.8\textwidth]{./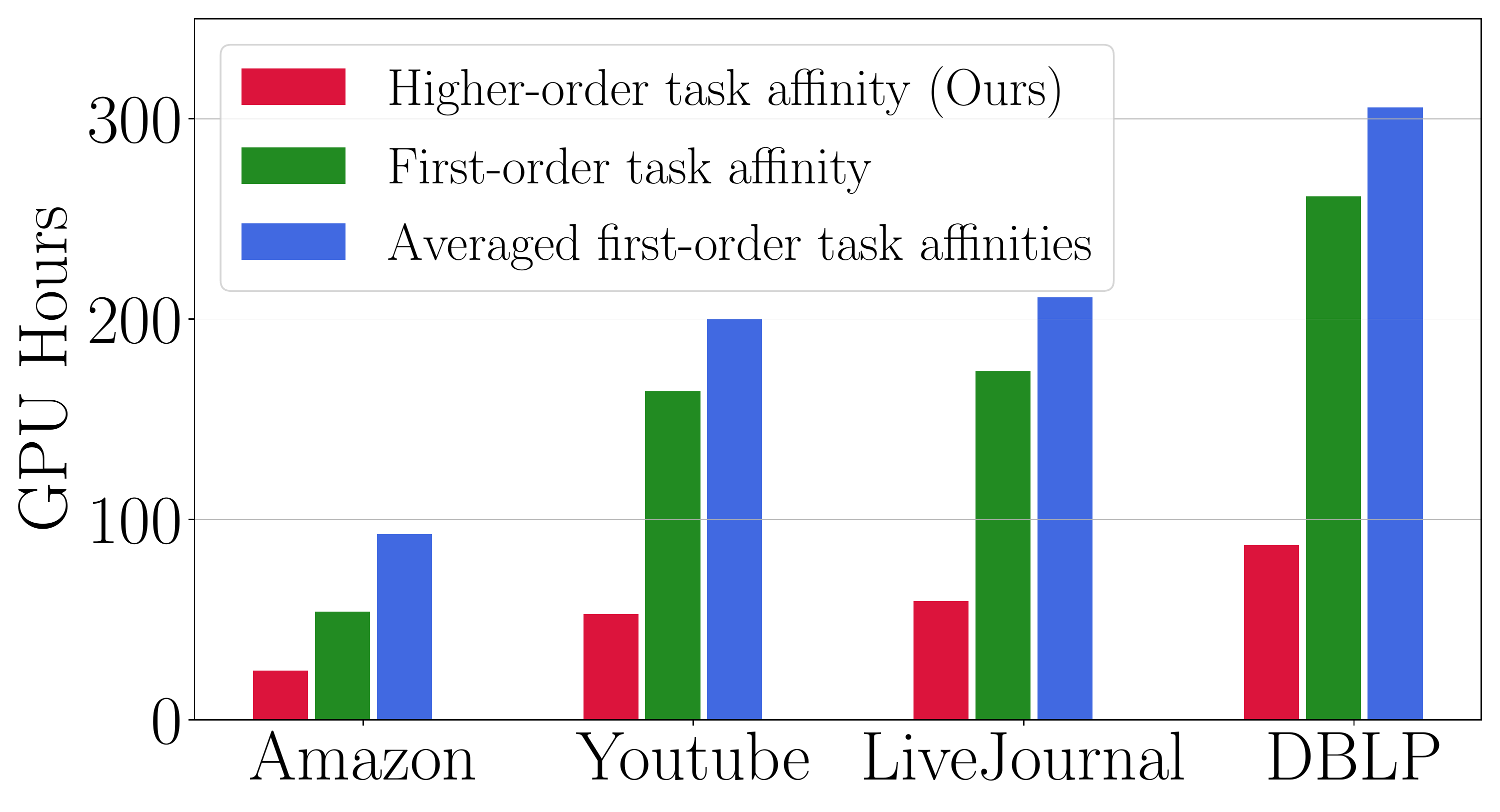}
    \end{minipage}
    \vspace{-0.1in}
    \caption{Comparing the runtime for computing higher-order vs. first-order task affinity \cite{standley2020tasks,fifty2021efficiently}.}
    \label{fig_runtime}
\end{figure}

\smallskip
\noindent\textbf{Run time analysis.}  Then, we present the run time of our approach. 
Our approach requires training $n$ networks, one for each random subset. 
We find that using $n \leq 20T$ samples suffice for estimating the task affinity scores to convergence.  
In contrast, previous methods \cite{standley2020tasks,fifty2021efficiently} estimate task affinities for every pair of tasks, resulting in training on $O(T^2)$ task pairs. 
Concretely, we report the run time of our approach in Figure \ref{fig_runtime}, evaluated on a single NVIDIA RTX GPU. 
Compared with the two previous methods, our approach requires \textbf{3.7}$\times$ less running time, averaged over the four data sets.

\medskip
\noindent\textbf{Speed up training.}
In practice, we can further speed up training with early stopping and downsampling.
Our experiments found that this leads to a significant speed-up, and the overhead on top of Naive MTL is only up to 2-3$\times$. In Table \ref{tab_running_time} of Appendix \ref{sec_running_time}, we report the running time of our approach by incorporating the speed-up methods, as compared with all MTL baselines.

\subsection{Results for overlapped community detection}\label{sec_clu_alg}

\noindent\textbf{Baselines.} 
We compare our approach with three classes of baseline approaches, which are selected to be representative in terms of relative improvement.
The first class of methods is a set of popular methods for community detection, including:
\begin{itemize}%
\item BigClam \cite{yang2013overlapping}.
\item Louvain clustering \cite{blondel2008fast}.
\item Network embedding methods including Node2Vec \cite{grover2016node2vec} and VERSE \cite{tsitsulin2018verse}. We use a logistic regression classifier on the node embedding for each community.
\item GNN-based community detection methods including MinCutPool \cite{bianchi2020spectral} and Deep Modularity Networks \cite{tsitsulin2023graph}.
\end{itemize}
Second, we consider two baseline methods that optimize all tasks using a shared model: 
\begin{itemize}%
\item Naive MTL \cite{caruana1997multitask} trains all tasks jointly in a shared model.
\item Mixture-of-Experts \cite{ma2018modeling} that trains multiple expert models on all tasks and uses a gating network to combine model outputs for each task in a weighted manner.
\end{itemize}
Third, we consider four task grouping baselines that find task groups and train a network on each group.
\begin{itemize}%
\item Forward selection: Start from all empty groups. Enumerate all tasks, add the task to each existing group, and assign it to the group, resulting in the best average performance. 
\item Backward selection: Start from a group with all tasks and other groups as empty. Enumerate all tasks in the first group, combine the task with the rest groups, and assign the task to the group resulting in the best average performance. 
\item First-order task affinity that evaluates the effect of one task's gradients on another task's loss \cite{fifty2021efficiently}.
\item Averaging the first-order task affinity scores to approximate higher-order task affinities \cite{standley2020tasks}. 
\end{itemize}
We note that \citet{fifty2021efficiently} and \citet{standley2020tasks} use a branch-and-bound algorithm to search for task groups but do not scale to one hundred tasks for the data sets.
We use their task affinity scores to compare these two methods but apply the spectral clustering procedure to find task groups.

\medskip
\noindent\textbf{Implementations.} 
We use a 3-layer SIGN model as the encoder for all MTL approaches, with a width of 256 neurons.
We use the VERSE embedding as the input node features. 
We compare approaches for splitting tasks into $b = 20$ groups. 
We use the same amount of model parameters for other MTL baselines.
In the evaluation, we report the macro F1-score of predicted communities to ground-truth community labels on the test set.

For our approach, we set the size of the subset as $\alpha=10$ and the number of samples as $n=2000$. 
We ablate the two parameters in Section \ref{sec_ablation} and find that the performance of our approach remains stable while varying them. 
We set the MTL performance metric $f_i(S)$ as the (negative) cross-entropy loss on the validation set.
We apply the spectral clustering algorithm in \cite{ng2001spectral,shi2000normalized} to find task groups on the symmetric adjacency matrix constructed from task affinity scores.

\begin{table*}[t!]
\begin{minipage}[t]{0.54\textwidth}
\centering
\caption{Macro F1-score of community detection tasks on four social networks. We compare our approach with graph embedding, MTL optimization, and feature subset selection methods. We report the averaged result for each experiment over three random seeds, including the standard deviations.}\label{tab_community_detection}
\begin{small}
\begin{tabular}{@{}lcccc@{}}
\toprule
Dataset   & {Amazon} & {Youtube} & {DBLP} & {LiveJournal}  \\ 
Nodes & {3,225} & {16,751} & {57,368} & {18,433}  \\
Edges & {20,524} & {104,513} & {420,122} & {1,397,580}  \\
\midrule
BigClam     & 27.30 $\pm$ 0.26  & 18.84 $\pm$ 0.18 & 13.46 $\pm$ 0.11  & 22.50 $\pm$ 0.31 \\
Louvain clustering    & 60.95 $\pm$ 0.19  & 29.03 $\pm$ 0.34 & 36.73 $\pm$ 0.34  & 64.08 $\pm$ 0.17  \\
Node2Vec    & 39.05 $\pm$ 0.10  & 32.44 $\pm$ 0.18 & 28.72 $\pm$ 0.10  & 50.40 $\pm$ 0.29 \\
VERSE       & 61.00 $\pm$ 0.32  & 38.17 $\pm$ 0.12 & 53.48 $\pm$ 0.24  & 58.71 $\pm$ 0.48 \\
Naive MTL   & 87.08 $\pm$ 4.04 &  43.42 $\pm$ 2.24  & 67.95 $\pm$ 2.28 & 82.56 $\pm$ 3.21 \\ 
Multi-Gate MoE  & 88.92 $\pm$ 6.65 & 44.65 $\pm$ 4.28 & 68.83 $\pm$ 4.06  &  83.08 $\pm$ 4.89\\
Forward Select  & 90.45 $\pm$ 3.63 & 47.62 $\pm$ 2.84 & 68.58 $\pm$ 2.96 &  86.19 $\pm$ 2.61  \\
Backward Select  & 90.41 $\pm$ 5.98 & 47.68 $\pm$ 3.01 & 68.63 $\pm$ 2.97 &  85.91 $\pm$ 4.18 \\
\midrule 
\textbf{Alg. \ref{alg_task_grouping} (Ours)} & \textbf{92.66 $\pm$ 4.85}  & \textbf{49.62 $\pm$ 2.26} & \textbf{70.68 $\pm$ 2.65} & \textbf{88.43 $\pm$ 2.70} \\
\bottomrule
\end{tabular}
\end{small}
\end{minipage}
\hfill
\begin{minipage}[t]{0.44\textwidth}
\centering
\caption{Test performance on multitask molecular graph prediction datasets. We compare our approach with MTL optimization methods and feature subset selection methods. We report the averaged result over three random seeds, including the standard deviations.}\label{tab_molecule}
\begin{small}
\begin{tabular}{@{}lcccc@{}}
\toprule
Dataset   & QM9 & Alchemy & OGB-molpcba  \\ 
Metric & MAE ($\downarrow$) & MAE ($\downarrow$) & AP ($\uparrow$) \\
Graphs & 129,433 & 202,579 & 437,929 \\
Nodes per graph & 18.0 & 10.1 & 26.0 \\
Edges per graph & 18.6 & 10.4 & 28.1 \\
Tasks & 12 & 12 & 128 \\
Groups & 3 & 3 & 20 \\
\midrule
Naive MTL   & 0.081 $\pm$ 0.003 & 0.103 $\pm$ 0.001 & 27.03 $\pm$ 0.23 \\ 
Multi-Gate MoE  & 0.079 $\pm$ 0.003 & 0.100 $\pm$ 0.001 & 28.38 $\pm$ 0.34  \\
Forward Select  & 0.077 $\pm$ 0.002 & 0.099 $\pm$ 0.001 & 28.72 $\pm$ 0.21  \\
Backward Select  & 0.073 $\pm$ 0.002 & 0.095 $\pm$ 0.004 & 28.50 $\pm$ 0.16 \\
\midrule 
\textbf{Alg. \ref{alg_task_grouping} (Ours)} & \textbf{0.067 $\pm$ 0.003} & \textbf{0.090 $\pm$ 0.001} & \textbf{29.73 $\pm$ 0.12} \\
\bottomrule
\end{tabular}
\end{small}
\end{minipage}
\end{table*}

\medskip
\noindent\textbf{Results.} 
Table \ref{tab_community_detection} reports the evaluation of four social networks with ground-truth community labels. 
First, we find that VERSE embedding achieves the best performance among all the node embedding methods. Thus, we use the VERSE embedding as a node feature for conducting MTL on graph neural networks. Due to the space constraint, we report the results of other community detection methods in Appendix \ref{sec_additional_results}. 

\begin{itemize}%
\item {\itshape Benefit of task grouping:} 
Compared with methods that optimize a joint model on all tasks, task grouping consistently performs better than the naive MTL and Mixture of Experts. 
Our approach outperforms them by \textbf{3.98\%} on average. 

\item {\itshape Benefit of modeling higher-order relationships:}
Compared with forward and backward selection, our approach achieves an average improvement of \textbf{2.18\%} over the datasets. %
Moreover, we compare our approach with clustering by two first-order task affinities. The results show that our approach outperforms them by \textbf{2.49\%} on average. 
This validates the advantage of using higher-order task affinities over first-order task affinities.
The results are shown in Table \ref{tab_first_order} of Appendix \ref{sec_additional_results}.  
\end{itemize}

\subsection{Results for molecular graph prediction}

Next, we apply our approach to molecular graph prediction tasks, including two multi-task regression data sets from TUDatasets \cite{morris2020tudataset} and one multi-task classification dataset from OGB \cite{hu2020open}.
In the graphs, nodes represent 3D coordinates of atoms in molecules, and edges encode distances between atoms. Each task corresponds to predicting a specific molecular property.
We use a 6-layer GIN model as the encoder, with a width of 64. 
We evaluate the mean absolute error (MAE) on the regression datasets and the average precision (AP) on the classification dataset.

Table \ref{tab_molecule} compares our approach with MTL baselines, including naive MTL, Mixture of Experts, and forward/backward selection.
We find that on these three data sets, our method still outperforms the baselines relatively by \textbf{4.6\%} on average.

\subsection{Ablation studies}\label{sec_ablation}

\noindent\textbf{Number of task groups $b$.} We discuss how the number of task groups is determined in our approach. We hypothesize that a larger number of task groups gives greater flexibility and tends to have better performance. Ideally, we can generate $T$ task groups, each for a particular target task, and select helpful source tasks for the target task in each group. We validate the hypothesis by varying the number of task groups between 5, 10, 20, and 100. The results validate that more group achieves better performance. Interestingly, we also find that using 20 groups achieves results comparable to those of using 100 groups. Thus, we set $b=20$ in our experiments. The details are reported in Appendix \ref{sec_abl_param}.

\medskip
\noindent\textbf{Subset size $\alpha$.} Recall that we collect MTL prediction losses through sampling random subsets of a size $\alpha$. We evaluate the performance of our approach by varying the size $\alpha \in \set{5, 10, 20}$. 
First, we observe similar convergence results using different sizes, as shown in Figure \ref{fig_transfer_pred}. 
Next, we apply algorithm \ref{alg_task_grouping} with different values of $\alpha$. We notice that the performances are comparable. Using $\alpha = 10$ achieves slightly better performance than the other two. We posit that using a larger $\alpha$ does not help because the number of related tasks in our community detection data sets is limited. 

\medskip
\noindent\textbf{Number of samples $n$.} We further explore how $n$ affects the algorithm results. Our observation in Figure \ref{fig_transfer_pred} is that collecting $n = 20T$ is sufficient for task affinity scores to converge. Meanwhile, using a smaller $n$ can also achieve near 80\% F1-score for predicting negative transfers.
Thus, we test the performance of algorithm \ref{alg_task_grouping} by varying $n \in \set{1000, 1500, 2000}$. 
We observe that using $n=1000$ still achieves comparable performance as using $n=2000$. The performance difference is within 0.5\%.

{}
\section{Theoretical Analysis}
\label{sec_theory}

In this section, we aim to develop a principled understanding of our higher-order affinity scores $[\theta_{i, j}]_{T\times T}$.
To this end, we study a planted model in a theoretical setup, where the tasks are assumed to follow a block structure.
We note that planted models have been widely used to study graph clustering \cite{abbe2017community}.
In this setting, we ask:
\begin{itemize}
    \item Do our affinity scores provably capture higher-order task relationships?
    \item Could the affinity scores be used to successfully separate related tasks from each block?
\end{itemize}
We provide a positive answer to both questions in a theoretical setup, where the labels of each task have been drawn from a linear model.
Our finding is that for two tasks from the same group in the planted model, their affinity scores will be provably higher than two tasks from different groups.
To describe this result, we first formally introduce the setup.

\medskip
\noindent\textbf{Setup.} Suppose we are learning $T$ tasks.
For each task, $i$ from $1$ up to $T$, let the node labels of this task be given by a vector $\tilde Y^{(i)}$, all of which are observed on a set of nodes denoted as $\tilde X$.
Let $m$ denote the size of the observed set of nodes.
We focus on regression tasks in the analysis.
Thus, the values of $\tilde Y^{(i)}$ are all real values.
To further simplify the analysis, we consider a one-layer linear graph diffusion layer as $f(X, G)= P_{_G} X$, where $P_{_G}$ (e.g., the normalized graph Laplacian matrix) denotes the diffusion matrix of the graph neural network, and $X$ denotes the matrix of node features.
We assume that $X$ is drawn from an isotropic Gaussian distribution, and $P_{_G}$ is full rank.
We measure the loss of this GNN against the label vector $\tilde Y^{(i)}$ using the Mean Squared Error (MSE):
\begin{align}
    \ell_{i}(W) = \frac 1 m \bignorms{\tilde P_{_G} \tilde X W - \tilde Y^{(i)}}^2. \label{eq_W}
\end{align}
where $\tilde P_{_G}$ denotes the propagation matrix restricted to the set of nodes in $\tilde X$.
Based on our algorithm, we first show that the relevance score $\theta_{i, j}$ admits a structure that measures the distance between $\tilde Y^{(i)}$ and $\tilde Y^{(j)}$.
When we sample a set of tasks $S \subseteq \set{1, 2, \dots, T}$ with a total of $\alpha$ tasks, then we average their per-task losses as
\begin{align}\label{eq_sample_loss}
    \ell_S(W) = \frac 1 {\alpha} \sum_{i \in S} \ell_i(W).
\end{align}

\medskip
\noindent\textbf{Notations.} We follow the convention of big-O notations for stating the result.
Given two functions $h(n)$ and $h'(n)$, we use $h(n) = \textup{O}(h'(n))$ or $h(n) \lesssim h'(n)$   to indicate that $h(n) \le C \cdot h'(n)$ for some fixed constant $C$ when $n$ is large enough.

\medskip
\noindent\textbf{Characterization of affinity scores.} Minimizing equation \eqref{eq_W} over $W$ leads to a closed form solution on $W$---let us denote this as $\hat W_{S}$, which we can then plug into task $i$'s loss $\ell_i(\hat W_{S})$.
We then average the value of $\ell_{i}(\hat W_{S})$ for subsets $S_1, S_2, \dots, S_n$ that include $j$ as part of the subset.
This gives the relevance score $\theta_{i, j}$:
\begin{align}
    \theta_{i, j} = \frac 1 {n_{i, j}} \sum_{1\le k \le n:\, \set{i, j} \subseteq S_k} \ell_i\big(\hat W_{S_k}\big).
\end{align}
In the following result, we derive an explicit form of the score $\theta_{i, j}$.
For any matrix $A \in \real^{m \times n}$, let $A^\dagger$ be its Moore-Penrose inverse.

\begin{lemma}\label{lemma_score}
    In the setting described above,
    let the projection matrix $\tilde \Sigma$ be given by $\tilde \Sigma = \tilde{P}_{_G} \tilde{X} \big( \tilde{X}^\top \tilde{P}_{_G}^\top \tilde{P}_{_G} \tilde{X} \big)^\dagger \tilde{X}^\top \tilde{P}_{_G}^\top$.
    For any $1 \le i, j \le T$, we have that the relevance score $\theta_{i, j}$ is equal to
    \begin{align}\label{eq_theta}
       \theta_{i, j} = \frac 1 {n_{i, j} \cdot m} \sum_{1 \le k \le n:\, \set{i, j} \subseteq S_k} \bignorms{ \tilde \Sigma \cdot \Bigg(\frac 1 {\alpha} \sum_{l \in S_k} \tilde Y^{(l)} \Bigg) - \tilde Y^{(i)}}^2. 
    \end{align}
\end{lemma}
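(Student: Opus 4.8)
The plan is to carry out a direct least-squares computation: first obtain the closed-form minimizer $\hat W_{S}$ of the averaged subset loss $\ell_S(W)$ through its normal equations, then substitute $\hat W_S$ back into the single-task loss $\ell_i$ and recognize the matrix $\tilde \Sigma$ as the orthogonal projection onto the column space of $\tilde P_{_G}\tilde X$. To fix notation, I would write $A = \tilde P_{_G}\tilde X$ and let $\bar Y_S = \frac{1}{\alpha}\sum_{l \in S}\tilde Y^{(l)}$ denote the average label vector over a subset $S$, so that by \eqref{eq_W} and \eqref{eq_sample_loss} the subset loss reads $\ell_S(W) = \frac{1}{\alpha m}\sum_{i \in S}\|A W - \tilde Y^{(i)}\|^2$.

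The first step is to show that minimizing $\ell_S$ is equivalent to a single regression against the mean label $\bar Y_S$. Expanding the squared norms and collecting terms, the dependence on $W$ enters only through $\bar Y_S$; concretely,
\begin{align*}
    \sum_{i \in S}\|A W - \tilde Y^{(i)}\|^2 = \alpha\,\|A W - \bar Y_S\|^2 + c_S,
\end{align*}
where $c_S = \sum_{i\in S}\|\tilde Y^{(i)}\|^2 - \alpha\|\bar Y_S\|^2$ does not depend on $W$. Minimizing the right-hand side is a standard linear least-squares problem, whose minimum-norm solution is $\hat W_S = A^\dagger \bar Y_S = (A^\top A)^\dagger A^\top \bar Y_S$, using the identity $A^\dagger = (A^\top A)^\dagger A^\top$ that holds for any matrix $A$.

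Next I would substitute $\hat W_S$ into $\ell_i$. The fitted value becomes $A \hat W_S = A(A^\top A)^\dagger A^\top \bar Y_S = \tilde \Sigma\,\bar Y_S$, which matches the definition of $\tilde \Sigma$ in the statement. Hence $\ell_i(\hat W_S) = \frac{1}{m}\|\tilde \Sigma\,\bar Y_S - \tilde Y^{(i)}\|^2$, and averaging this over all sampled subsets $S_k$ containing both $i$ and $j$ reproduces the claimed formula \eqref{eq_theta}.

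The algebra here is routine, so the only point demanding care is the use of the Moore--Penrose pseudoinverse. When $A^\top A$ is rank-deficient the minimizer $\hat W_S$ is not unique, but the fitted value $A \hat W_S = \tilde \Sigma \bar Y_S$ is, because $\tilde \Sigma = A A^\dagger$ is the orthogonal projector onto the range of $A$ and is thus independent of the chosen minimizer; this is what guarantees that $\theta_{i,j}$ is well-defined. Under the full-rank assumption on $P_{_G}$ and the isotropic Gaussian features in the setup, $A^\top A$ is generically invertible and the pseudoinverse reduces to the ordinary inverse, but stating the result with $A^\dagger$ keeps it valid even in the degenerate case---this is the one subtlety I would make sure to justify carefully.
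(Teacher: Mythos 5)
Your proposal is correct and follows essentially the same route as the paper's proof: obtain the closed-form minimizer $\hat W_{S} = \big(\tilde{X}^\top \tilde{P}_{_G}^\top \tilde{P}_{_G} \tilde{X}\big)^\dagger \tilde{X}^\top \tilde{P}_{_G}^\top \bar Y_S$, substitute it into $\ell_i$ so the fitted value becomes $\tilde\Sigma\,\bar Y_S$, and average over the subsets containing both $i$ and $j$. Your additional remarks---the explicit reduction of the subset loss to a single regression against the mean label, and the observation that $A\hat W_S$ is well-defined even when $A^\top A$ is rank-deficient---simply fill in algebra the paper leaves implicit.
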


\begin{proof}[Proof of Lemma \ref{lemma_score}]
    From equation \eqref{eq_sample_loss}, $\hat{W}_S$ is the quadratic minimization of $L_S(W)$, expressed as 
    \begin{align*}
        \hat{W}_S = \Big( \tilde{X}^\top \tilde{P}_{_G}^\top \tilde{P}_{_G} \tilde{X} \Big)^\dagger \tilde{X}^\top \tilde{P}_{_G}^\top \Bigg(\frac 1 {\alpha} \sum_{i \in S} \tilde{Y}^{(i)} \Bigg).
    \end{align*}
    We have that for any subset $S \subseteq \set{1, 2, \dots, k}$,
    \begin{align}\label{eq_loss_hat_w}
        \ell_i(\hat{W}_S) = \frac 1 m \bignorms{\tilde \Sigma \cdot \Bigg(\frac 1 {\alpha} \sum_{l \in S} \tilde{Y}^{(l)} \Bigg) - \tilde Y^{(i)}}^2.
    \end{align}
    and for any $1 \le i, j \le n$, by definition (cf. equation \eqref{eq_aff})
    \begin{align}
        \theta_{i,j} &= \frac 1 {n_{i, j} \cdot m} \sum_{1\leq k \leq n:\, \set{i, j} \subseteq S_k} \bignorms{ \tilde \Sigma \cdot \Bigg(\frac 1 {\alpha} \sum_{l \in S_k} \tilde Y^{(l)}\Bigg) - \tilde Y^{(i)}}^2
    \end{align}
    Hence, the proof of equation \eqref{eq_theta} is completed.
\end{proof}

\noindent\textbf{Block structure of affinity scores.} Next, we show that under a separation condition between the label vectors $Y^{(1)}, Y^{(2)}, \dots, Y^{(T)}$, our algorithm can be used to separate the tasks into separated groups provably.
More precisely, let $\Sigma = P_{_G} X (X^{\top} P_{_G}^{\top} P_{_G} X)^{\dagger} X^{\top} P_{_G}^{\top}$.

\begin{assumption}\label{assume}
Suppose that the label vectors can be separated into $C$ groups such that:
\begin{itemize}%
    \item For any two different $i$ and $j$ within each group,
        \[ \bignorms{\Sigma \Big(Y^{(i)} - Y^{(j)} \Big)} \le a. \]
    \item For any two different $i'$ and $j'$ from different groups,
        \[ \bignorms{\Sigma \Big(Y^{(i')} - Y^{(j')} \Big)} \ge b. \]
\end{itemize}
\end{assumption}

Now we are ready to state a structural characterization on the affinity scores $[\theta_{i,j}]_{T \times T}$.

\begin{theorem}\label{thm_separate}
    Suppose the features and the label vectors of every task are given based on the setup specified within this section and satisfy Assumption \ref{assume}.
    Assume that the propagation matrix $P_{_G}$ is full rank.
    Let $\epsilon$ be a fixed constant that does not grow with $n$ and $m$, and $\delta$ be a value that is less than one.
    
    When $m = O\big(d^2 \log^4 d \log^2(T \delta^{-1}) \varepsilon^{-2}\big)$,
    $n = O\big(\log(T \delta^{-1}) \varepsilon^{-2}\big)$,
    and $b^2 - a^2 \ge O\big(d^4 \log^8 d \log^4(T \delta^{-1}) \varepsilon^{-4}\big)$, then with probability at least $1 - \delta$ over the randomness of the training samples, the affinity scores $[\theta_{i, j}]_{T \times T}$ satisfy the following block structure:
    For any $1\le i, j, k \le T$ such that $i, j$ come from the same group, but $i, j'$ come from different groups, then
    \begin{align}
        \theta_{i, j'} - \theta_{i, j} \ge \frac{\varepsilon} 2.
    \end{align}
\end{theorem}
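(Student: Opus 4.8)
The plan is to start from the closed form for $\theta_{i,j}$ in Lemma \ref{lemma_score} and reduce the comparison $\theta_{i,j'} - \theta_{i,j}$ to a clean ``signal'' term controlled by Assumption \ref{assume}, plus error terms controlled by concentration. First I would exploit the fact that $\tilde\Sigma$ is an orthogonal projection: writing $\bar Y_k = \alpha^{-1}\sum_{l \in S_k}\tilde Y^{(l)}$ and using $i \in S_k$, the Pythagorean identity gives
\[
\bignorms{\tilde\Sigma \bar Y_k - \tilde Y^{(i)}}^2 = \bignorms{\tilde\Sigma\big(\bar Y_k - \tilde Y^{(i)}\big)}^2 + \bignorms{(I - \tilde\Sigma)\tilde Y^{(i)}}^2,
\]
where the last term depends only on $i$ and hence cancels in the difference $\theta_{i,j'} - \theta_{i,j}$. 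Moreover $\bar Y_k - \tilde Y^{(i)} = \alpha^{-1}\sum_{l \in S_k}(\tilde Y^{(l)} - \tilde Y^{(i)})$, so each summand only involves the projected label differences $v_l := \tilde\Sigma(\tilde Y^{(l)} - \tilde Y^{(i)})$, with $v_i = 0$.

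Second, I would pass from the empirical average over the sampled subsets to its conditional expectation. Since each $S_k$ is uniform of size $\alpha$, conditioning on $\set{i,j}\subseteq S_k$ makes the remaining $\alpha - 2$ tasks uniform over the other $T-2$ tasks; define $\mu_{i,j}$ as the conditional expectation of $m^{-1}\norm{\alpha^{-1}\sum_{l\in S}v_l}^2$. A concentration inequality over the $n_{i,j}$ conditionally i.i.d. summands (each a bounded, sub-exponential quadratic form) shows that $\theta_{i,j}$, minus the common $i$-term, lies within an $O(\varepsilon)$ window of $\mu_{i,j}$; this is where $n = O(\log(T\delta^{-1})\varepsilon^{-2})$ and a union bound over the $O(T^2)$ pairs enter. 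Expanding $\norm{\alpha^{-1}\sum_l v_l}^2 = \alpha^{-2}\sum_{l,l'}\langle v_l, v_{l'}\rangle$ and taking the conditional expectation, the terms that do not explicitly reference $j$ are identically distributed for the $j$- and $j'$-subsets and therefore cancel in $\mu_{i,j'}-\mu_{i,j}$. The surviving ``signal'' is dominated by the diagonal self-terms $\alpha^{-2}\big(m^{-1}\norm{v_{j'}}^2 - m^{-1}\norm{v_j}^2\big)$, with the residual cross-terms bounded by Cauchy--Schwarz against the common averaged direction.

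Third, I would relate the restricted quantity $m^{-1}\norm{v_j}^2 = m^{-1}\bignorms{\tilde\Sigma(\tilde Y^{(j)} - \tilde Y^{(i)})}^2$ to the population quantity $\bignorms{\Sigma(Y^{(j)} - Y^{(i)})}^2$ appearing in Assumption \ref{assume}. Using the linear-model generation of the labels and the isotropic Gaussian design, the empirical second-moment matrix $m^{-1}\tilde X^\top \tilde P_{_G}^\top \tilde P_{_G}\tilde X$ concentrates around its population counterpart; a matrix Bernstein / covariance-estimation argument then shows $m^{-1}\norm{v_j}^2$ tracks $\bignorms{\Sigma(Y^{(j)} - Y^{(i)})}^2$ up to a multiplicative $(1\pm O(\varepsilon))$ factor once $m = O(d^2\log^4 d\,\log^2(T\delta^{-1})\varepsilon^{-2})$. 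Combining with Assumption \ref{assume}, the different-group self-term is at least of order $b^2$ while the same-group self-term is at most of order $a^2$, so the population gap $\mu_{i,j'} - \mu_{i,j}$ is at least of order $\alpha^{-2}(b^2 - a^2)$ minus the accumulated errors; choosing $b^2 - a^2 \ge O(d^4\log^8 d\,\log^4(T\delta^{-1})\varepsilon^{-4})$ makes this at least $\varepsilon$, and subtracting the $\le \varepsilon/2$ sampling error from the second step yields $\theta_{i,j'} - \theta_{i,j}\ge \varepsilon/2$ with probability at least $1-\delta$.

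I expect the main obstacle to be the feature-concentration step that relates the empirical projection $\tilde\Sigma$ on the $m$ sampled nodes to the full projection $\Sigma$. Because the quantities of interest are quadratic in the label differences, the projection and design errors enter in squared and cross-product form, so controlling them to within the additive $\varepsilon$ target forces the separation $b^2 - a^2$ to dominate error terms scaling like $d^4$ and $\varepsilon^{-4}$; this compounding is precisely the source of the strong separation requirement in the statement, and obtaining the matrix-concentration bound tightly enough---without losing extra polynomial factors in $d$---is the delicate part.
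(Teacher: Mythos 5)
Your proposal is correct and follows essentially the same route as the paper's proof: the Pythagorean cancellation of the $\bignorms{(\id-\tilde\Sigma)\tilde Y^{(i)}}^2$ term, Hoeffding/Chernoff concentration of $\theta_{i,j}$ around the all-subsets average with a union bound over pairs, reduction of the population gap to the diagonal terms $\alpha^{-2}\big(\bignorms{\Sigma(Y^{(j')}-Y^{(i)})}^2-\bignorms{\Sigma(Y^{(j)}-Y^{(i)})}^2\big)$ plus a bounded cross term, and matrix-Bernstein concentration relating $\tilde\Sigma$ on the $m$ sampled nodes to $\Sigma$. The only cosmetic difference is that you phrase the middle step as a conditional expectation of the expanded quadratic form, while the paper pairs subsets $S^*\cup\{j\}$ against $S^*\cup\{j'\}$ over base subsets $S^*$ excluding both; these are the same computation.
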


Our result characterizes the block structure of the affinity scores.
The affinity scores are lower within the block of nodes from the same community.
Conversely, the affinity scores would be higher for any pair of nodes that cross two communities.
Based on this characterization, it is clear that by applying a spectral clustering algorithm over $[\theta_{i,j}]_{T \times T}$, one could recover the group structures specified under Assumption \ref{assume}.
For future work, it would be interesting to strengthen our analysis with relaxed assumptions further and extend it to more
general losses and planted models.
The complete proof can be found in Appendix \ref{app_proof}.

\section{Conclusion}

This paper studied multitask learning on graphs and designed a generic boosting procedure that improves MTL by finding related tasks and training them in groups.
We first estimate higher-order task affinities by sampling random task subsets and evaluating multitask performances. 
Then, we find task groups by clustering task affinity scores.   
Experiments show that our higher-order task affinity scores predict negative transfers from multiple tasks to one task.
Our approach improves over previous MTL methods on various community detection data sets.
The theoretical analysis further demonstrates that using our task affinity scores provably separates related and unrelated tasks.

Our work opens up interesting questions for future work. For example, can we incorporate various node or link prediction tasks to enhance community detection in multitask learning? 
Can we apply recent developments in correlation clustering  \cite{mandaglio2021correlation,bonchi2022correlation} to determine the number of clusters?

\smallskip
\noindent\textbf{Acknowledgement.}
Thanks to the anonymous referees for providing constructive feedback that improved our work significantly.
Thanks to Ruoxuan Xiong, Xiaojie Mao, and Yi Liu for fruitful discussions during various stages of this work.
D. L. is supported by a start-up fund from Khoury College of Computer Sciences, Northeastern University.

\balance
\begin{refcontext}[sorting=nyt]
    \printbibliography
\end{refcontext}

\appendix
\clearpage
\onecolumn
\section{Proof of Theorem \ref{thm_separate}}\label{app_proof}

\noindent\textbf{Proof Sketch:}
    The proof of Theorem \ref{thm_separate} is by carefully examining the sampling structures of our algorithm.
    A key insight is that $\theta_{i, j}$ measures the relevance score of task $i$ to task $j$ while accounting for the presence of other tasks besides $i$ and $j$.
    There are two steps in the proof.
    First, we show that $\theta_{i,j}$ deviates from a population average that takes into account all subsets of size $\alpha$ by an error of $O(n^{-1/2})$.
    This is based on Hoeffding's inequality.
    See equation \eqref{eq_theta_dist} in Appendix \ref{app_proof}.
    Second, since $\tilde \Sigma$ is a projection matrix based on equation \eqref{eq_bar_theta}, we have that for any $i$ and $S_k$,
    \begin{align}
          \bignorms{\tilde \Sigma \cdot \Bigg( \frac 1 {\alpha} \sum_{l \in S_k} \tilde Y^{(l)} \Bigg) - \tilde Y^{(i)} }^2  
        = \bignorms{\tilde\Sigma \cdot \Bigg( \frac 1 {\alpha} \sum_{l \in S_k} \tilde Y^{(l)} - \tilde Y^{(i)} \Bigg)}^2
        + \bignorms{\Big(\id - \tilde\Sigma\Big) \cdot \tilde Y^{(i)}}^2. \label{eq_sep}
    \end{align}
    Thus, for $\theta_{i, j'} - \theta_{i, j}$, the second term from equation \eqref{eq_sep} is canceled out. Then, we apply Assumption \ref{assume} to separate the first term between $i, j$ and $i, j'$.
    Our proof follows several previous works that analyze information transfer in multitask learning \cite{yang2020analysis,li2023identification}.

\medskip
Next, we argue that for any $i, j$, the following holds with probability $1 - \delta$.

    \begin{claim}\label{claim_err1}
    In the setting of Theorem \ref{thm_separate}, we have that for any $1 \le i < j \le T$:
    \begin{align}\label{eq_ydist}
        \bignorms{\frac 1 m \tilde X^{\top} \tilde P_{_G}^{\top} \Big(\tilde Y^{(i)} - \tilde Y^{(j)}\Big) - \frac{1}{N} X^{\top} P_{_G}^{\top}\Big(Y^{(i)} - Y^{(j)}\Big)} \leq \frac{4 B \sqrt d \log d \bignorms{P_{_G}}  \log ( 2T^2 \delta^{-1} ) }{\sqrt m }.
    \end{align}
    \end{claim}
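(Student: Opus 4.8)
The plan is to read the left-hand side of \eqref{eq_ydist} as the deviation of an empirical average over the $m$ observed nodes from its population counterpart over all $N$ nodes, to control this deviation by a concentration argument for a \emph{single} pair of tasks, and then to take a union bound over all pairs. Fix $1 \le i < j \le T$ and abbreviate $\tilde g = \tilde Y^{(i)} - \tilde Y^{(j)} \in \real^m$ and $g = Y^{(i)} - Y^{(j)} \in \real^N$, together with the propagation-weighted vectors $\tilde w = \tilde P_{_G}^\top \tilde g \in \real^m$ and $w = P_{_G}^\top g \in \real^N$. In this notation the quantity to be bounded is $\bignorms{\tfrac1m \tilde X^\top \tilde w - \tfrac1N X^\top w}$, a difference of two $d$-dimensional vectors. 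First I would identify the population target: since the observed nodes form a representative subsample and the rows of $X$ are i.i.d.\ isotropic Gaussians, $\tfrac1N X^\top w$ is, up to lower-order terms, the population-level mean of the empirical average $\tfrac1m \tilde X^\top \tilde w$, so the claim becomes a genuine concentration statement.

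Second, I would reduce the vector bound to $d$ scalar concentration statements. Writing the $c$-th coordinate as $\tfrac1m \sum_{v=1}^m \tilde X_{v,c}\, \tilde w_v$, this is an average of $m$ products of a standard Gaussian entry $\tilde X_{v,c}$ with a weight $\tilde w_v$. I would control the weights through $\bignorms{\tilde w} \le \bignorms{P_{_G}}\, \bignorms{\tilde g} \lesssim \bignorms{P_{_G}}\, B\, \sqrt m$, using that the labels are bounded in magnitude by $B$; this is where the $\bignorms{P_{_G}}$ factor enters, and $\bignorms{\tilde w}^2$ serves as the variance proxy. To tame the product terms I would truncate the Gaussian entries at level $\textup{O}\!\big(\sqrt{\log(m d\, T^2 \delta^{-1})}\big)$, which holds simultaneously for all $md$ entries with probability $1-\delta/(2T^2)$ by a maximal inequality; this truncation is responsible for the $\log d$ factor. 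On the truncated event each summand is bounded, so Hoeffding's inequality (or a Bernstein bound) gives a per-coordinate deviation of order $\bignorms{P_{_G}}\, B\, \log d \cdot m^{-1/2}$ with probability $1 - \delta/(2 d T^2)$.

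Third, I would assemble the pieces. A union bound over the $d$ coordinates combined with $\bignorms{v} \le \sqrt d\, \max_c |v_c|$ converts the per-coordinate bounds into the vector bound and supplies the $\sqrt d$ factor; a further union bound over the $\binom T2 \le T^2$ pairs $(i,j)$ supplies the $\log(2 T^2 \delta^{-1})$ factor, yielding \eqref{eq_ydist} with the stated constant simultaneously for every pair with probability at least $1-\delta$.

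The step I expect to be the main obstacle is the first one: $\tilde g$ and $\tilde X$ are \emph{not} independent, because each $\tilde Y^{(i)}$ is generated from a linear model in $\tilde X$, so the products $\tilde X_{v,c}\,\tilde w_v$ are not a clean sum of independent mean-zero terms and $\tfrac1N X^\top w$ is not literally the expectation of $\tfrac1m \tilde X^\top \tilde w$. I would resolve this either by conditioning on the label-generating noise and arguing that the cross terms remain sub-Gaussian with variance proxy controlled by $\bignorms{P_{_G}}$ and $B$, or by working directly with the difference $g = Y^{(i)} - Y^{(j)}$, whose task-specific parts partially cancel and thereby decouple from the shared feature randomness. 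This coupling is also what forces the cruder $\log d$ (rather than $\sqrt{\log d}$) dependence via truncation, consistent with the statement. Once the correct population target is pinned down, the remaining truncation-plus-Hoeffding and union-bound steps are routine.
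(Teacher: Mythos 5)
Your proposal is correct in its overall reading of the claim---the left-hand side is the deviation of the average of $\tilde P_{_G}\tilde X$-rows weighted by $\tilde Y^{(i)}-\tilde Y^{(j)}$ over the $m$ observed nodes from the corresponding average over all $N$ nodes, controlled by concentration for a fixed pair and a union bound over the $\binom{T}{2}$ pairs---but your technical route differs from the paper's. The paper applies the matrix Bernstein inequality (Tropp, Thm.~6.1.1) once to the vector-valued sum, bounding the norm of each summand by $2B\bignorms{P_{_G}}\sqrt d\log d$; there the $\sqrt d\log d$ comes from a high-probability bound on the norm of a $d$-dimensional isotropic Gaussian feature row, not from a coordinatewise union bound. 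You instead truncate the Gaussian entries, run scalar Hoeffding per coordinate, and recover the $\sqrt d$ from the $\ell_\infty\to\ell_2$ conversion; this is more elementary and yields a bound of the same order, at the cost of an extra union bound over coordinates. On the obstacle you flag: the dependence between $\tilde Y^{(i)}$ and $\tilde X$ is real, but the paper (implicitly) sidesteps it by conditioning on the full feature matrix $X$ and all label vectors and letting the only randomness be which $m$ of the $N$ nodes are observed---under that conditioning the summands are deterministic vectors whose common mean is exactly $\frac1N X^\top P_{_G}^\top(Y^{(i)}-Y^{(j)})$, and Gaussianity is used only on a separate high-probability event to bound the summand norms. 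Adopting that conditioning would let you drop your truncation/decoupling machinery; as written, your first step (treating $\frac1N X^\top w$ as a population expectation over the Gaussian features) targets a slightly different statement than the one claimed, which compares two empirical averages of the \emph{same realized} data. Either way the resulting bound matches \eqref{eq_ydist} up to constants.
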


    \begin{proof}
    We use the matrix Bernstein inequality (cf. Theorem 6.1.1 in Tropp (2015)) here. Let the $i$-th column of $X P_{_G}$ be denoted as $u_i$. 
    Thus, we have $u_i^\top \Big( Y^{(i)} - Y^{(j)} \Big) \leq 2BN\bignorms{P_{_G}} \sqrt d \log d$
    For all $t \ge 0$, we have:
    \begin{align*}
        \Pr\left[\bignorms{ \frac{1}{m} \tilde X^{\top} \tilde P_{_G}^{\top} \Big(\tilde Y^{(i)} - \tilde Y^{(j)}\Big) - \frac {1} {N} X^\top P_{_G}^\top \Big(Y^{(i)} - Y^{(j)}\Big)} \ge t \right]
        \le 2T^2 \cdot \exp\left( - \frac {(mt)^2 / 2} {(2B \norms{P_{_G}} \sqrt d \log d)^2 m + (2B\norms{P_{_G}} \sqrt d \log d) mt / 3}\right).
    \end{align*}
    \end{proof}
    Recall we have assumed that $X$ is drawn from an isotropic Gaussian.
    Thus, each row vector of $P_{_G} X$ follows a Gaussian distribution with a full-rank covariance matrix since $P_{_G}$ is full rank.
    Thus, by Gaussian concentration results, for a random subset of $m$ row vectors of $P_{_G} X$, they must be invertible.
    Thus, we will next argue that under the condition $m \ge d \log d$, with probability at least $1 - \delta$, the following holds.
    
    \begin{claim}\label{claim_err2}
    In the setting of Theorem \ref{thm_separate}, provided that $m \ge d \log d$, with probability at least $1 - \delta$, we have that
    \begin{align}
        \bignorms{\Bigg(\frac 1 m\tilde X^{\top} \tilde P_{_G}^{\top} \tilde P_{_G} \tilde X\Bigg)^{-1} - \Bigg(\frac 1 N \frac{}{} X^{\top} P_{_G}^{\top} P_{_G} X\Bigg)^{-1} }
        \lesssim \frac {2 \bignorms{P_{_G}} \sqrt d \log d \cdot \log (2T^2 \delta^{-1})} {\sqrt m}. \label{eq_error}
    \end{align}
    \end{claim}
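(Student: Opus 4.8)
\noindent\textbf{Proof proposal for Claim \ref{claim_err2}.}
The plan is to reduce the bound on the difference of the two inverses to a bound on the difference of the two uninverted second-moment matrices, together with control of their smallest eigenvalues. Write $A = \frac 1 m \tilde X^\top \tilde P_{_G}^\top \tilde P_{_G} \tilde X$ and $B = \frac 1 N X^\top P_{_G}^\top P_{_G} X$, and recall the resolvent identity $A^{-1} - B^{-1} = A^{-1}(B - A) B^{-1}$, which gives
\[ \bignorms{A^{-1} - B^{-1}} \le \bignorms{A^{-1}} \cdot \bignorms{A - B} \cdot \bignorms{B^{-1}}. \]
So it suffices to (i) bound $\bignorms{A - B}$ by a quantity of order $m^{-1/2}$, and (ii) show that $\bignorms{A^{-1}}$ and $\bignorms{B^{-1}}$ are both $O(1)$, i.e. that the smallest eigenvalues of $A$ and $B$ are bounded away from zero. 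Combining these with the displayed inequality yields exactly the rate in \eqref{eq_error}.

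For step (i), I would mirror the argument of Claim \ref{claim_err1}. Writing the rows of the propagated feature matrix $P_{_G} X$ as $u_1, \dots, u_N \in \real^d$, we have $B = \frac 1 N \sum_{v=1}^N u_v u_v^\top$, which is exactly the expectation of the sampled average $A$ over the $m$ observed nodes. Since $X$ is isotropic Gaussian and $P_{_G}$ has bounded norm, a standard Gaussian tail bound gives $\bignorms{u_v} \lesssim \bignorms{P_{_G}} \sqrt d \log d$ uniformly over the samples with probability $1 - \delta/2$, so each centered summand $u_v u_v^\top - B$ has operator norm controlled at the same scale. Applying the matrix Bernstein inequality (Theorem 6.1.1 in Tropp (2015), exactly as in Claim \ref{claim_err1}) then yields
\[ \bignorms{A - B} \lesssim \frac{\bignorms{P_{_G}} \sqrt d \log d \cdot \log(2 T^2 \delta^{-1})}{\sqrt m} \]
with probability at least $1 - \delta/2$.

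Step (ii) is the main obstacle, because bounding $\bignorms{A^{-1}}$ requires a lower bound on $\lambda_{\min}(A)$, and $A$ is itself the random subsampled matrix --- this is the circularity that the hypothesis $m \ge d \log d$ is there to resolve. For the population matrix, since the rows of $X$ are isotropic Gaussian and $P_{_G}$ is full rank, $B$ concentrates around a fixed full-rank matrix, so $\lambda_{\min}(B) \ge c\, \sigma_{\min}(P_{_G})^2$ for an absolute constant $c > 0$, and hence $\bignorms{B^{-1}} = O(1)$. For $A$, I would invoke the matrix Chernoff bound for sums of independent positive semidefinite matrices: when $m \ge c'\, d \log d$, the sampled average satisfies $\lambda_{\min}(A) \ge \tfrac 1 2 \lambda_{\min}(B)$ with probability $1 - \delta/2$. (Equivalently, the concentration bound from step (i) already forces $\bignorms{A - B} \le \lambda_{\min}(B)/2$ at this sample size, so Weyl's inequality gives the same conclusion.) Therefore $\bignorms{A^{-1}} \le 2 / \lambda_{\min}(B) = O(1)$.

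Finally, I would take a union bound over the two failure events in steps (i) and (ii), so that both hold simultaneously with probability at least $1 - \delta$. Plugging $\bignorms{A^{-1}}, \bignorms{B^{-1}} = O(1)$ and the bound on $\bignorms{A - B}$ into the resolvent inequality gives the claimed estimate \eqref{eq_error}. The one point requiring care is matching the per-term operator-norm bound and the variance proxy in matrix Bernstein so that the dimension and logarithmic factors collapse to the stated $\sqrt d \log d$ and the power of $\bignorms{P_{_G}}$ comes out correctly; this is precisely where the isotropy of $X$ and the uniform Gaussian tail bound on $\bignorms{u_v}$ are used.
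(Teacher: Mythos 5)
Your proposal is correct and follows essentially the same route as the paper: a matrix Bernstein bound on the difference of the uninverted second-moment matrices, followed by the resolvent identity (which the paper mislabels as Sherman--Morrison) and a bound of the form $\norm{E}/\bigl(\lambda_{\min}(A)(\lambda_{\min}(A)-\norm{E})\bigr)$, which implicitly requires exactly the smallest-eigenvalue control you spell out in step (ii). If anything, you are more explicit than the paper about why $m \ge d\log d$ breaks the circularity in lower-bounding $\lambda_{\min}$ of the subsampled matrix, a point the paper handles only with an informal appeal to Gaussian concentration.
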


    \begin{proof}
    We use the matrix Bernstein inequality (cf. Theorem 6.1.1 in Tropp (2015)) to deal with the above spectral norm.
    Let the $i$-th column of $X^{\top} P_{_G}^{\top}$ be denoted as $v_i$. 
    Thus, we have that $\norm{v_i} \le \bignorms{P_{_G}} \cdot \sqrt d \log d$.
    
    In expectation over the randomness of $\tilde P_{_G} \tilde X$,
    for all $t \ge 0$, we have:
    \begin{align*}
        \Pr\left[ \bignorms{\frac 1 m\tilde X^{\top} \tilde P_{_G}^{\top} \tilde P_{_G} \tilde X - \frac 1 N  X^{\top} P_{_G}^{\top} P_{_G} X } \ge t \right]
        \le 2T^2 \cdot \exp\left( - \frac {(mt)^2 / 2} {(\norms{P_{_G}} \sqrt d \log d)^2 m + (\norms{P_{_G}} \sqrt d \log d) mt / 3}\right).
    \end{align*}
    With some standard calculations,
    this implies that for any $\delta \ge 0$, with probability at least $1 - \delta$, equation \eqref{eq_error} holds.
    
    Next, denote by
    \[ E = \frac {\tilde X^{\top} \tilde P_{_G}^{\top} \tilde P_{_G} \tilde X} {m} - \frac{ X^{\top} P_{_G}^{\top} P_{_G} X} {N}
    ~~\text{ and }~~ A = \frac{X^{\top} P_{_G}^{\top} P_{_G} X} {N}. \]
    By the Sherman-Morrison formula calculating matrix inversions, we get
    \begin{align}
         \bignorms{\Big( \frac{\tilde X^{\top} \tilde P_{_G}^{\top} \tilde P_{_G} \tilde X}{m}  \Big)^{-1} - \Big( \frac{X^{\top} P_{_G}^{\top} P_{_G} X }{N}  \Big)^{-1}}
        = \bignorms{(E + A)^{-1} - A^{-1}} \nonumber 
        = &\bignorms{A^{-1} \Big( A E^{-1} + \id_{T\times T}\Big)^{-1}} \nonumber\\
        =& \bignorms{A^{-1} E \Big( A + E \Big)^{-1}} \nonumber \\
        \le& \big(\lambda_{\min}(A) \big)^{-1} \cdot \norm{E}_{2} \cdot \big(\lambda_{\min}(A + E) \big)^{-1} \nonumber \\
        \le& \frac {\norm{E}_2} {\lambda_{\min}(A) (\lambda_{\min}(A) - \norm{E}_2)}. \label{eq_err_1}
    \end{align}
    \end{proof}

Now we are ready to prove the main theorem.

\begin{proof}[Proof of Theorem \ref{thm_separate}]
    There are two steps in the derivation of this theorem.
    First, we examine the difference between equation \eqref{eq_theta} and the population average, given by:
    \begin{align}
        \bar{\theta}_{i, j} = \frac{1}{A_{i, j} \cdot m} \sum_{1\le k\le T:\, \set{i, j} \in S_k} \bignorms{\tilde \Sigma \cdot \Bigg( \frac 1 {\alpha} \sum_{l \in S_k} \tilde Y^{(l)} \Bigg) - \tilde Y^{(i)}}^2, \label{eq_bar_theta}
    \end{align}
    where $T = \binom{T}{\alpha}$ is the total number of possible subsets of $\set {1, 2, \dots, T}$ that have a fixed size $\alpha$, and $A_{i, j} = \binom{T-2}{\alpha-2}$ is the number of subsets of size $\alpha$ such that $i, j$ are both in the subset.
    Thus, $\ex{\theta_{i,j}} = \bar{\theta}_{i,j}$.
    We will first argue that the difference between $\theta_{i, j}$ and $\bar{\theta}_{i, j}$.
    Let $B = \sup_{1\leq i \leq k}\bignorm{ Y^{(i)}}_{\infty}$.
    We will show that for any $1 \le i, j \le T$, their difference satisfies:
    \begin{align}\label{eq_theta_dist}
        \bigabs{\theta_{i, j} - \bar{\theta}_{i, j}} \le {4B^2}\sqrt{\frac{ \log ( T / \delta ) }{ n }},
    \end{align}
    by the Chernoff bound and the union bound.
    
    Since $\tilde \Sigma^2 = \tilde \Sigma$, the eigenvalues of $\tilde \Sigma$ only consist of some ones and zeros. For any subset $S_k$, We have
    \begin{align*}
        \bignorms{\tilde \Sigma \cdot \Bigg( \frac 1 {\alpha} \sum_{l \in S_k} \tilde Y^{(l)} \Bigg) - \tilde Y^{(i)}}^2 &\leq \Bigg( \frac 1 {\alpha} \sum_{l \in S_k} \bignorms{\tilde \Sigma \cdot \tilde Y^{(l)} } + \bignorms{\tilde Y^{(i)}} \Bigg)^2 \\
        \leq& \Bigg( \frac 1 {\alpha} \sum_{l \in S_k} \bignorms{ \tilde Y^{(l)} } + \bignorms{\tilde Y^{(i)}} \Bigg)^2 \leq 4B^2 m.
    \end{align*}
    By the Chernoff bound, with probability $1 - 2\delta$ and for any $i,j$ and $k$, the gap between $\theta_{i, j}$ and $\bar \theta_{i,j}$ can be bounded as follows:
    \begin{align}
         \bigabs{\theta_{i, j} - \bar{\theta}_{i, j}} \le {4B^2}\sqrt{\frac{ \log ( \delta^{-1} ) }{2n_{i,j}}} \le 4B^2 \sqrt{\frac{\log(\delta^{-1})}{2 n}}.
    \end{align}
    By union bound, with probability $1 - 2 \binom{T}2 \cdot \delta$,
    we have
    \begin{align}
        \bigabs{\theta_{i, j} - \bar{\theta}_{i, j}} \le {4B^2}\sqrt{\frac{ \log ( \delta^{-1} ) }{2 n}}. 
    \end{align}
    Thus, the first step of proof has been completed.

    \bigskip
    In the second step, we want to compare $\theta_{i, j}$ with $\theta_{i, j'}$ where $i, j$ are from the same group, but $i, j'$ are from different groups.
    Based on equation \eqref{eq_sep}, we have that
    \begin{align*}
         \bar\theta_{i, j'} - \bar\theta_{i, j}
        = \frac 1 {A_{i, j'} \cdot m} \sum_{1\le k\le T:\,\set{i,j'}\subseteq S_k} \bignorms{\tilde \Sigma \cdot \Bigg(\frac 1 {\alpha} \sum_{l \in S_k} \tilde Y^{(l) } - \tilde Y^{(i)} \Bigg)}^2 
        - \frac 1 {A_{i, j} \cdot m} \sum_{1\le k\le T:\,\set{i,j}\subseteq S_{k}} \bignorms{\tilde \Sigma \cdot \Bigg(\frac 1 {\alpha} \sum_{l \in S_{k}} \tilde Y^{(l)} - \tilde Y^{(i)}\Bigg)}^2.
    \end{align*}
    Next, recall that $A_{i, j} = A_{i, j'} = \binom{\alpha - 2}{k - 2}$.
    Thus, the right-hand side of the above equation is equal to 
    \begin{align}
        \bar\theta_{i, j'} - \bar\theta_{i, j}
        = \frac 1 {A_{i, j} \cdot m} \Bigg( &\sum_{1\le k\le T:\,\set{i, j'}\subseteq S_k, j\notin S_k} \bignorms{\tilde \Sigma \cdot \Bigg(\frac 1 {\alpha} \sum_{l \in S_k} \tilde Y^{(l) } - \tilde Y^{(i)} \Bigg)}^2 
        - \sum_{1\le k\le T:\,\set{i, j}\subseteq S_{k},j'\notin S_{k}} \bignorms{\tilde \Sigma \cdot \Bigg(\frac 1 {\alpha} \sum_{l \in S_{k}} \tilde Y^{(l)} - \tilde Y^{(i)}\Bigg)}^2 \Bigg). \label{eq_diff}
    \end{align}
    Based on Claim \ref{claim_err1} and Claim \ref{claim_err2}, with probability at least $1 - \delta$, 
    \begin{align}
        \bignorms{\tilde \Sigma \Big(\tilde Y^{(i)} - \tilde Y^{(j)}\Big) - \Sigma \Big(Y^{(i)} - Y^{(j)}\Big)} 
        \leq &~ \Big( 2 B\sqrt d \log d \bignorms{P_{_G}} \Big) \frac{2 \sqrt d \log d \bignorms{P_{_G}} \log ( 2T^2 \delta^{-1} ) }{\sqrt m } \nonumber \\
        & + \Big(\sqrt d \log d \bignorms{P_{_G}} \Big) \frac{4 B \sqrt d \log d \bignorms{P_{_G}} \log ( 2T^2 \delta^{-1} ) }{\sqrt m }\nonumber \\
        =& \frac{8 B d \log^2 d \bignorms{P_{_G}}^2 \log ( 2T^2 \delta^{-1} ) }{\sqrt m }. \label{eq_con_err}
    \end{align}
    We can see that for any $k$ and $i,j$, with probability at least $1 - \delta$, the equation \eqref{eq_diff} is
    \begin{align*}
        \bar\theta_{i, j'} - \bar\theta_{i, j}
        \geq & \frac 1 {A_{i, j} \cdot m} \Bigg( \sum_{1\le k\le T:\,\set{i, j'}\subseteq S_k, j\notin S_k} \bignorms{\Sigma \cdot \Bigg(\frac 1 {\alpha} \sum_{l \in S_k} Y^{(l) } - Y^{(i)} \Bigg)}^2 
        - \sum_{1\le k\le T:\,\set{i, j}\subseteq S_{k},j'\notin S_{k}} \bignorms{\Sigma \cdot \Bigg(\frac 1 {\alpha} \sum_{l \in S_{k}} Y^{(l)} - Y^{(i)}\Bigg)}^2 \Bigg) \\
        & - 8B  \cdot \frac{8 B d \log^2 d \bignorms{P_{_G}}^2 \log ( 2T^2 \delta^{-1} ) }{\sqrt m } 
    \end{align*}
    Consider a subset $S^*$ where $i\in S^*$ and $j,j'\notin S^*$. $|S^*| = \alpha - 1$. We have
    \begin{align*}
        \Sigma \cdot \Bigg(\frac 1 {\alpha} \sum_{l \in S^* \cup j} Y^{(l) } - Y^{(i)} \Bigg) &=  \frac{1}{\alpha} \sum_{l \in S^*} \Sigma \cdot \Bigg( Y^{(l)} - Y^{(i)} \Bigg)    
        + \frac{1}{\alpha} \cdot \Sigma \cdot \Bigg( Y^{(j)} - Y^{(i)} \Bigg) \\
        \Sigma \cdot \Bigg(\frac 1 {\alpha} \sum_{l \in S^* \cup j'} Y^{(l) } - Y^{(i)} \Bigg) &=  \frac{1}{\alpha} \sum_{l \in S^*} \Sigma \cdot \Bigg( Y^{(l)} - Y^{(i)} \Bigg)    
        + \frac{1}{\alpha} \cdot \Sigma \cdot \Bigg( Y^{(j')} - Y^{(i)} \Bigg)
    \end{align*}
    Thus, we have 
    \begin{align*}
        &\bignorms{\Sigma \cdot \Bigg(\frac 1 {\alpha} \sum_{l \in S^* \cup j'} Y^{(l) } - Y^{(i)} \Bigg)}^2 - \bignorms{\Sigma \cdot \Bigg(\frac 1 {\alpha} \sum_{l \in S^* \cup j} Y^{(l) } - Y^{(i)} \Bigg)}^2 \\ 
        =& \frac{1}{\alpha^2} \Bigg( \bignorms{\Sigma \cdot \Big( Y^{(j')} - Y^{(i)} \Big) }^2 - \bignorms{\Sigma \cdot \Big( Y^{(j)} - Y^{(i)} \Big) }^2 \Bigg) 
        + 2 \cdot \Bigg \langle \frac{1}{\alpha} \sum_{l \in S^*} \Sigma \cdot \Big( Y^{(l)} - Y^{(i)} \Big), \frac{1}{\alpha} \cdot \Sigma \Big( Y^{(j')} - Y^{(i)} \Big) - \frac{1}{\alpha} \cdot \Sigma \Big( Y^{(j)} - Y^{(i)} \Big) \Bigg \rangle. 
    \end{align*}
    After taking the sum over all $S^*$, we obtain
    \begin{align*}
        \bar\theta_{i, j'} - \bar\theta_{i, j}
        \geq & \frac{1}{\alpha^2 m} \Bigg( \bignorms{\Sigma \cdot \Big( Y^{(j')} - Y^{(i)} \Big) }^2 - \bignorms{\Sigma \cdot \Big( Y^{(j)} - Y^{(i)} \Big) }^2 + \Bigg \langle 2 \frac{\binom{T-3}{\alpha - 3}}{\binom{T-2}{\alpha - 2}} \sum_{1\leq l \leq T: ~ l\neq i,j,j'} \Sigma \cdot \Big( Y^{(l)} - Y^{(i)} \Big), \Sigma \Big( Y^{(j')} - Y^{(j)} \Big) \Bigg \rangle \Bigg)\\
        & - \frac{64 B^2 d \log^2 d \bignorms{P_{_G}}^2 \log ( 2T^2 \delta^{-1} ) }{ \sqrt m } 
    \end{align*}
    Since $\binom{T-3}{\alpha - 3} / \binom{T-2}{\alpha - 2} = \frac{\alpha - 2}{T - 2}$, the above equation can be simplified as follows: 
    \begin{align*}
        \bar\theta_{i, j'} - \bar\theta_{i, j}
        \geq & \frac{1}{\alpha^2 m} \Bigg( b^2 - a^2 -\frac{2(\alpha - 2)}{T - 2} (T - 3) \cdot 4B^2 \Bigg) - \frac{64 B^2 d \log^2 d \bignorms{P_{_G}}^2 \log ( 2T^2 \delta^{-1} ) }{ \sqrt m } 
    \end{align*}
    From equation $\eqref{eq_theta_dist}$, we get
    \begin{align}
        \theta_{i, j'} - \theta_{i, j}
        \geq & \frac{1}{\alpha^2 m} \Bigg( b^2 - a^2 -\frac{2(\alpha - 2)}{T - 2} (T - 3) \cdot 4B^2 \Bigg) - \frac{64 B^2 d \log^2 d \bignorms{P_{_G}}^2 \log ( 2T^2 \delta^{-1} ) }{ \sqrt m } - 8B^2\sqrt{\frac{ \log ( T / \delta ) }{ n }} \label{eq_gap}
    \end{align}

    In conclusion, we have shown that provided that
    \begin{align}
        b^2 - a^2 \ge 8 \alpha B^2 m \varepsilon,
    \end{align} 
    then under the condition that
    \begin{align}
        & m \ge 64^2 B^4 d^2 \log^4 d \bignorms{P_{_G}}^4 \log^2(2T^2 \delta^{-1}) \varepsilon^{-2}, ~~\text{ and } \\
        & n \ge B^4 \log(T \delta^{-1})  \varepsilon^{-2},
    \end{align}
    the following holds:
         For any $i = 1, 2,\dots, T$, if $j'$ comes from the same group as $i$, but $j$ comes from a different group compared with $i$, then $\theta_{i, j'} - \theta_{i, j} \ge \varepsilon / 2$.
         
    Thus, we have shown that there exists a block structure in the affinity scores matrix $[\theta_{i, j}]_{T\times T}$.
    The proof of Theorem \ref{thm_separate} is now completed.
\end{proof}

\section{Additional Experiments}\label{sec_omitted_results}

In this section, we provide additional experiments to support our approach.
First, we provide additional observations regarding task relationships. These include structural differences between tasks in terms of PPR vectors and studying the effect of model size on mitigating negative transfer.
Second, we compare the running time of our approach with baselines.
Third, we provide comparisons with other baseline methods and ablation studies. 

\subsection{Comparing structural differences between tasks}\label{sec_ppr_similarity}

We study why negative transfers happen between two community assignment tasks when trained together. We hypothesize that the difference in graph diffusion processes of the two tasks contributes to negative transfer in graph neural networks. If their graph diffusion processes are very different, using a shared GNN would lead to a negative transfer. Otherwise, we expect a positive transfer between the two tasks. 
To validate this hypothesis, we compute each task's personalized PageRank (PPR) vector using its community labels as the seed set. Then, we measure the cosine similarity of the PPR vectors between tasks from the same group and tasks from different groups. We conduct this comparison based on the task groupings found by our method. 
Across four community detection datasets, tasks within the same group exhibit \textbf{8.8$\times$} higher cosine similarities in their PPR vectors than tasks from different groups. 
Table \ref{tab_ppr_similarity} reports the results.
This suggests that graph diffusion processes of tasks clustered together are more similar.

\begin{table}[h!]
\centering
\caption{This table reports the cosine similarity of PPR vectors between tasks from the same group and tasks from different groups. For tasks from the same group, their PPR vectors have higher cosine similarities than tasks from different groups.}
\label{tab_ppr_similarity}
\begin{tabular}{lcccc}
\toprule
Dataset & Amazon & Youtube & DBLP & LiveJournal \\
\midrule
PPR Cosine Similarity for tasks from the same group & 0.234 & 0.248 & 0.234 & 0.284 \\
PPR Cosine Similarity for tasks from different groups & 0.078 & 0.022 & 0.014 & 0.069 \\
\bottomrule
\end{tabular}
\end{table}

\subsection{Studying negative transfers by increasing the model size}\label{sec_model_size_GAMLP}
Previously in Section \ref{sec_observation}, we demonstrate that increasing the model size cannot address negative transfers between two tasks, using the SIGN model as the base model. We observe similar results of using a more powerful GNN with attention, i.e., GAMLP \citep{zhang2022graph}. 
We select one target task and one source task whose joint MTL performance is worse than STL for the target task. Then, We vary the hidden width of a GAMLP model from 32 to 2048 and repeat MTL training. Table \ref{tab_gamlp} reports the $F_1$ scores of the MTL and STL models on the target task. 
We observe that the MTL performance is consistently lower than STL, even as the width increases. 
This reaffirms our hypothesis that despite using larger models as width increases, they still do not fix the negative transfers from the source task to the target task in multi-task learning.

\begin{table*}[h!]
\centering
\caption{This table reports the $F_1$ score of a target task, comparing the STL and MTL with a source task that causes negative transfer. 
Even after increasing the model size of GAMLP, there exists a consistent negative transfer. }\label{tab_gamlp}
\begin{tabular}{@{}lccccccc@{}}
\toprule
Model size   & 32 & 64 & 128 & 256 & 512 & 1024 & 2048  \\ 
\midrule
MTL & 24.8 $\pm$ 0.3 & 25.5 $\pm$ 0.2 & 28.8 $\pm$ 0.3 & 32.4 $\pm$ 0.6 & 27.7 $\pm$ 0.2 & 26.5 $\pm$ 0.3 & 19.7 $\pm$ 0.1\\
STL & 46.3 $\pm$ 0.4 & 46.3 $\pm$ 0.2 & 47.0 $\pm$ 0.3  & 43.3 $\pm$ 0.3 & 42.9 $\pm$ 0.5 & 43.3 $\pm$ 0.3 & 43.6 $\pm$ 0.2 \\
\bottomrule
\end{tabular}
\end{table*}

\subsection{Comparing the running time} \label{sec_running_time}

We provide a comparison of the running time of our approach with the baselines in terms of GPU hours evaluated on a single GPU.
We notice that the running time of our approach is comparable to the naive MTL approach, using $3.5\times$ running time on average compared to the Naive MTL. 
This is achieved by using early stopping and downsampling in our implementation to speed up the training of each MTL model, i.e., computing $f_i(S)$ on each subset $S$.
The results are reported in Table \ref{tab_running_time}.

\begin{table}[h!]
\centering
\caption{Running time (GPU hours) of our approach after adding early stopping and downsampling to speed up the training of MTL models, as compared with baseline approaches.}\label{tab_running_time}
\begin{tabular}{lcccc}
\toprule
Method & Amazon & YouTube & DBLP & LiveJournal \\
\midrule
Naive MTL & 0.80H & 1.27H & 1.96H & 3.35H \\
Multi-Gate MoE & 7.01H & 10.39H & 17.28H & 31.74H \\
Forward Selection & 26.91H & 49.61H & 53.38H & 88.28H \\
Backward Selection & 37.90H & 62.89H & 69.39H & 105.94H \\
Clustering by First-Order Task Affinity & 92.68H & 199.99H & 224.22H & 305.69H \\
Clustering by Higher-Order Approximation & 87.35H & 197.26H & 207.56H & 294.69H \\
\midrule
Our approach w/o Early Stopping and Downsampling & 24.46H &	52.79H & 59.19H & 87.17H \\
Our approach w/ Early Stopping and Downsampling & 2.19H & 4.29H & 4.92H & 7.49H \\
\bottomrule
\end{tabular}
\end{table}

\subsection{Comparison with additional community detection and task grouping methods}\label{sec_additional_results}

We compare our approach with two other community detection methods, including MinCutPool \cite{bianchi2020spectral} and Deep Modularity Networks \cite{tsitsulin2023graph}. These methods design GNN pooling methods for graph clustering. For each method, we train the pooling module together with the SIGN base model. The results are shown in Table \ref{tab_first_order}. We find that our approach can also outperform them by \textbf{5.8\%} on average.

Furthermore, we compare our approach with clustering by two first-order task affinities, including the first-order task affinity \cite{fifty2021efficiently} and approximating higher-order task affinity through averaging \cite{standley2020tasks}. The results are shown in Table \ref{tab_first_order}. 
The results show that our approach outperforms them by \textbf{2.5\%} on average. 
This validates the advantage of using higher-order task affinities over first-order task affinities.

\begin{table*}[h!]
\centering
\caption{Macro F1-score of community detection tasks on four social networks. We compare our approach with GNN-based community detection methods and two previous task grouping methods based on first-order task affinity. 
Each result is averaged over three random seeds.
}\label{tab_first_order}
\begin{tabular}{@{}lcccc@{}}
\toprule
Dataset   & {Amazon} & {Youtube} & {DBLP} & {LiveJournal}  \\ 
\midrule
MinCutPool \cite{bianchi2020spectral}  & 84.24 $\pm$ 0.19 & 44.28 $\pm$ 0.49 & 67.49 $\pm$ 0.96 & 81.87 $\pm$ 1.06\\
Deep Modularity Networks \cite{tsitsulin2023graph}  & 83.30 $\pm$ 1.07 & 43.58 $\pm$ 0.77 & 66.32 $\pm$ 0.15 & 79.84 $\pm$ 0.80\\
Clustering by First-order Task Affinity \cite{fifty2021efficiently}  & 90.99 $\pm$ 4.06 & 45.23 $\pm$ 2.73 & 68.23 $\pm$ 3.24 & 83.76 $\pm$ 3.77 \\
Clustering by Higher-order Approximation \cite{standley2020tasks} & 91.61 $\pm$ 3.86 & 46.34 $\pm$ 2.57 & 68.87 $\pm$ 2.23 & 84.61 $\pm$ 2.56 \\
\midrule 
\textbf{Alg. \ref{alg_task_grouping} (Ours)} & \textbf{92.66 $\pm$ 4.85} & \textbf{49.62 $\pm$ 2.26} &  \textbf{70.68 $\pm$ 2.65}  & \textbf{88.43 $\pm$ 2.70} \\ 
\bottomrule
\end{tabular}
\end{table*}

\subsection{Ablation study of Algorithm \ref{alg_task_grouping}'s parameters}\label{sec_abl_param}
We discuss the three parameters in our approach: the number of task groups $b$, the subset size $\alpha$, and the number of subsets $n$. We find that a larger number of task groups yields better performance. Furthermore, our approach remains stable under the variation of subset size and number of subsets. The results are reported in Table \ref{tab_ablation}. 

First, we ablate the number of task groups $b$ and vary it between 5, 10, 20, and 100. 
The results confirm our hypothesis that a larger number of task groups leads to better performance. Moreover, $b=20$ yields comparable results to $b=100$, which achieves a 49.76 $F_1$ score. Therefore, we use $b=20$ in our experiments. 

Second, We vary the subset size  $\alpha$ between 5, 10, and 20.
We observe similar performance for different task grouping settings, using $\alpha = 10$ slightly outperforming the other two. 
The result suggests that using a larger $\alpha$ does not help because the number of related tasks in community detection applications is limited. 

Third, we vary the number of subsets $n$ between 1000, 1500, and 2000. 
We observe that using $n=1000$ still achieves comparable performance as using $n=2000$. The performance difference is within 0.5\%.

\begin{table*}[h!]
\centering
\caption{Study of parameter sensitivity on the Youtube Dataset by varying the number of clusters $b$, the subset size $\alpha$, and the number of samples $n$.}\label{tab_ablation}
\begin{tabular}{@{}lccccc@{}}
\toprule
$b$  & 5 & 10 & 20  \\
\midrule
Fix $\alpha=10$ and $n=2000$  & 46.65 $\pm$ 3.26 & 48.10 $\pm$ 3.09	& 49.62 $\pm$ 2.62  \\
\midrule
$\alpha$  & 5 & 10 & 20 \\ 
\midrule
Fix $n=2000$ and $b=20$ & 49.02 $\pm$ 3.10 & 49.62 $\pm$ 2.62 &  49.06 \\
\midrule
$n$  & 1000 & 1500 & 2000  \\
\midrule
Fix $\alpha=10$ and $b=20$  & 49.20 $\pm$ 2.99 & 49.42 $\pm$ 3.20	& 49.62 $\pm$ 2.62  \\
\bottomrule
\end{tabular}
\end{table*}

{}

\end{document}